\documentclass{article} %
\usepackage[preprint]{colm2026_conference}

\usepackage{microtype}
\usepackage{hyperref}
\usepackage{url}
\usepackage{bbm}
\usepackage{amsmath}
\usepackage{amsthm}
\usepackage{amssymb}
\usepackage{pifont}
\usepackage{tikz}
\usetikzlibrary{backgrounds,fit,arrows.meta}
\usepackage{subcaption}
\usepackage{arydshln}
\usepackage{xcolor}
\usepackage{colortbl}
\usepackage{booktabs}
\usepackage{multirow}
\usepackage{todonotes}
\usepackage{nicematrix}
\usepackage{wrapfig}
\usepackage[most]{tcolorbox}
\usepackage{xcolor}
\usepackage{enumitem}

\definecolor{promptred}{RGB}{170,0,0}
\definecolor{promptbg}{RGB}{248,242,232}
\definecolor{promptblue}{RGB}{0,0,170}
\definecolor{promptgreen}{RGB}{0,170,0}
\definecolor{evalheader}{RGB}{220, 235, 252}  %
\definecolor{evalEasy}{RGB}{212, 237, 218}     %
\definecolor{evalMedium}{RGB}{255, 243, 205}   %
\definecolor{evalHard}{RGB}{255, 218, 185}     %
\definecolor{evalExtreme}{RGB}{248, 200, 200}  %
\definecolor{evalAll}{RGB}{225, 225, 235}       %

\newtheorem{lemma}{Lemma}

\usepackage{lineno}

\definecolor{darkblue}{rgb}{0, 0, 0.5}
\hypersetup{colorlinks=true, citecolor=darkblue, linkcolor=darkblue, urlcolor=darkblue}

\usepackage{color-edits}
\addauthor{ag}{orange}
\addauthor{gn}{blue}
\addauthor{ak}{green}

\newcommand{\method}{RAO}
\newcommand{\methodlong}{Recursive Agent Optimization}
\newcommand{\textcraft}{\textsc{Textcraft-Synth}}
\newcommand{\oolong}{\textsc{Oolong-Real}}
\newcommand{\deepdive}{\textsc{DeepDive}}
\newcommand{\arte}{\textsc{ART-E}}

\newcommand{\cmark}{\textcolor{green!60!black}{\ding{51}}}
\newcommand{\xmark}{\textcolor{red!70!black}{\ding{55}}}
\newcommand{\qmark}{\textcolor{gray}{\textbf{?}}}

\title{Recursive Agent Optimization}

\author{Apurva Gandhi$^1$ \quad Satyaki Chakraborty$^2$ \quad Xiangjun Wang$^2$ \\ \textbf{Aviral Kumar$^1$} \quad \textbf{Graham Neubig$^1$} \\
$^1$Carnegie Mellon University \& $^2$Amazon AGI Labs \\
{\tt \{apurvag,aviralku,gneubig\}@cs.cmu.edu} \\ \tt \{satyaki,xjai\}@amazon.com}

\begin{document}

\ifcolmsubmission
\linenumbers
\fi

\maketitle

\begin{abstract}
We introduce \textit{\methodlong~(\method)}, a reinforcement learning approach for training \emph{recursive agents}: agents that can spawn and delegate sub-tasks to new instantiations of themselves recursively. Recursive agents implement an inference-time scaling algorithm that naturally allows agents to scale to longer contexts and generalize to more difficult problems via divide-and-conquer. \method~provides a method to train models to best take advantage of such recursive inference, teaching agents \emph{when} and \emph{how} to delegate and communicate. We find that recursive agents trained in this way enjoy better training efficiency, can scale to tasks that go beyond the model's context window, generalize to tasks much harder than the ones the agent was trained on, and can enjoy reduced wall-clock time compared to single-agent systems.\footnote{Webpage and code: \url{https://apga.github.io/RAO}}
\end{abstract}

\vspace{-0.3cm}
\section{Introduction}
\vspace{-0.2cm}

Large language model (LLM) agents are increasingly deployed on real-world tasks such as software engineering, research assistance, and computer use. As these systems improve, the tasks users expect them to solve are also becoming harder: they involve longer horizons, larger effective working memory, and substantial exploration and backtracking. Many such tasks are naturally amenable to divide-and-conquer. For instance, a large software change can be decomposed into code investigation and editing subproblems; a research task can be split into retrieval, synthesis, and verification stages; a long document or log corpus can be partitioned into manageable pieces and processed in parallel.

This has led to interest in \emph{recursive} agent systems~\citep{anthropic_subagents, codex_subagents, context_folding, recursive_language_models}, in which an agent can spawn sub-agents to solve subtasks with fresh contexts. Recursive execution offers several advantages. \textbf{First}, it expands effective working memory, since each sub-agent receives a fresh context window rather than inheriting the full history of its parent. \textbf{Second}, it enables divide-and-conquer by breaking a large task into smaller, more tractable subproblems. \textbf{Third}, when subproblems can be solved independently, recursion can exploit concurrency and reduce wall-clock latency for accomplishing a task. Together, these properties make recursion a compelling inference-time scaling primitive for agents.

However, most existing recursive and multi-agent systems treat recursion purely as an inference-time scaffold wrapped around a pretrained model~\citep{ adapt, asyncsoftware, autogen}. 
The model itself is typically not trained to decide when delegation is useful, how to formulate effective subtasks, how to communicate information across levels of the execution tree, or how to combine sub-agent outputs into a final solution. If recursive execution is going to be a core test-time primitive, then the policy should be trained to use it well. As such, prior work shows training models with inference-time scaffolds in the loop does make them more amenable to deployment with inference-time scaffolds~\citep{wu2026reasoning,qednano2026,kimi}. This raises a central question: \emph{how should we train a model to exploit recursive inference effectively?}

In this work, we introduce \textit{\methodlong~(\method)}, a reinforcement learning approach for end-to-end training of recursive agents. \method{} trains a single LLM policy that is instantiated at every node of a recursively-generated execution tree. As a result, the same model must learn both how to solve the task assigned to it and how to generate useful delegated subtasks for spawned copies of itself. Importantly, \method{} is not restricted to a fixed hierarchy or hand-designed orchestration scheme: it supports dynamically generated recursive execution trees and optimizes all decisions in the tree jointly. Training recursive agents may also improve learning efficiency, not just inference-time behavior. Recursive execution naturally generates related tasks at multiple levels of difficulty, yielding structured intermediate supervision and an implicit curriculum over simpler subproblems. Thus, the recursive structure used at inference time can also provide a useful training signal for generally improving model capability. This motivates a second question that we tackle: \emph{how can we leverage the recursive structure of inference to better train agents?}

We evaluate \method{} on three benchmarks, including deep research, long document processing and \textsc{TextCraft-Synth}, a controlled synthetic environment we introduce, inspired by Minecraft-style crafting tasks, in which we can vary task horizon, difficulty, and recursive structure. We find that training for recursive execution yields several benefits. Recursive agents trained with \method{} solve tasks that exceed the base model's context window, generalize to substantially harder tasks than those seen during training by leveraging recursive delegation (up to 10 levels deep), and when tasks can be decomposed into parallel subproblems, can reduce wall-clock execution time relative to non-recursive baselines (up to 2.5$\times$). In addition, \method{} improves training efficiency over single-agent training by exploiting recursive decomposition during learning.

\begin{figure}[t]
    \centering
    \includegraphics[width=\linewidth]{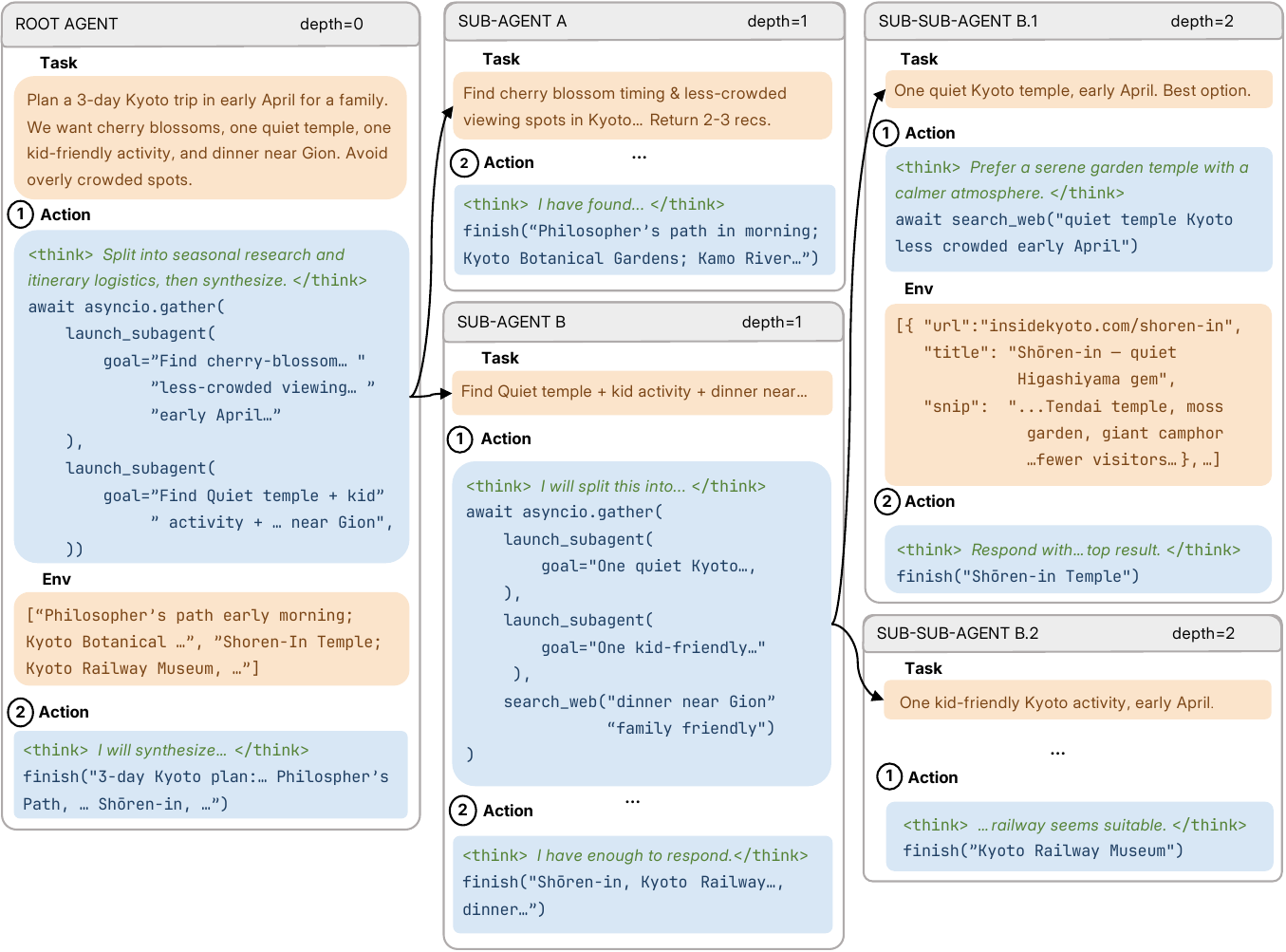}
   \caption{Example of recursive agent inference on a deep research or travel-planning task where agents can decide to spawn and delegate sub-problems to sub-agents. This produces a dynamically structured and recursive execution tree where each node corresponds to one agent instance attempting to solve an assigned task. \method~optimizes a single policy to act across all levels tree, teaching it to best take advantage of recursive inference. }
    \label{fig:recursive-inference}
\end{figure}

\section{Recursive Agents: Inference and Training}

We study \emph{recursive agents}: agents that can delegate sub-tasks to new instances of themselves during execution. A rollout no longer resembles a single flat trajectory, but a dynamically generated tree of trajectories. While many prior works and deployed systems have explored recursive inference (Section~\ref{sec:related-work}), this work focuses on how to best train a model to leverage recursion: it must learn how to solve assigned tasks directly, when delegation is beneficial, how to formulate sub-tasks for child agents, and how to combine their outputs into a final solution. Unlike approaches based on fixed hierarchies or hand-specified orchestration, our setting allows the policy to generate execution trees dynamically, with arbitrary branching patterns up to a depth limit, and RAO trains all agent instances in the tree jointly.

\subsection{Recursive Agent Inference}

\textbf{Formulation.} A task $X$ is a problem specification given to an agent. When the policy $\pi_\theta$ is run on $X$, it produces a trajectory $\tau_X$, consisting of the sequence of actions and observations used to solve the task. During this rollout, the agent may decide to spawn zero or more child agents on delegated sub-tasks $X_1,\dots,X_n$, where both the number and the content of these sub-tasks are chosen by the policy itself. This process induces a rooted \emph{execution tree} $\mathcal{T}$: each node corresponds to one agent instance attempting one assigned task, the root corresponds to the original task, and the children of a node correspond to the delegated sub-tasks generated by that agent. Fig.~\ref{fig:recursive-inference} shows an example of a recursive agent trajectory on a travel-planning task. In particular, note that sub-agents can also recursively spawn their own child agents.

\textbf{Implementation.} We instantiate recursive agents as an extension of an agent that interleaves natural-language reasoning with code execution in a Python REPL (read-eval-print-loop) similar to \cite{codeact}. This interface gives the model access to standard programming constructs such as variables, control flow, string manipulation, and asynchronous execution, allowing it to solve tasks by writing short programs and executing them. To support recursion, we extend the agent's action space by exposing an asynchronous function \[ \texttt{async launch\_subagent}(\texttt{goal}, \ldots) \rightarrow \texttt{Any}\footnote{The exact parameters and return type that the \texttt{launch\_subagent} function takes can be tailored to tasks; Appendix~\ref{app:action-space} provides examples.}, \] which launches a new instance of the same policy on a delegated sub-task and returns the child agent's output to the parent. Because the return type is unrestricted, parents can delegate a wide range of subproblems and request outputs in whatever format is most useful for downstream computation, including structured objects rather than just strings. 

The parent interacts with child agents through ordinary Python control flow. It can launch children sequentially when later sub-tasks depend on earlier results, or concurrently using standard Python libraries like \texttt{asyncio} when sub-tasks are independent. Returned objects can be stored in variables, transformed, combined, or passed into later sub-agent calls. This makes recursive decomposition highly flexible: the policy decides when to delegate, what sub-task specification to provide, what type of output to request, whether to run child agents serially or in parallel, and how to aggregate their results. To ensure bounded computation, we impose explicit limits on recursion depth and on the number of environment steps available to each agent instance. We note that our agent harness implementation resembles that of Recursive Language Models (RLMs) by \cite{recursive_language_models}, with the main differences being that we demonstrate positive results for recursion depths more than one, which have proven elusive in previous works \citep{rlmsoverthink}, and we support concurrent sub-agent execution by implementing the delegation primitive as an asynchronous function. Furthermore, as we discuss next, the focus of this work is on training models to leverage recursion effectively at inference-time.

\subsection{\methodlong~(\method)}\label{sec:rao}

A rollout on a root task produces an execution tree whose nodes correspond to recursively instantiated copies of the same policy. RAO trains all nodes in this tree jointly. At a high level, RAO combines a local reward defined at each node with policy optimization over recursively generated execution trees.

\subsubsection{Local Node Reward}
\begin{wrapfigure}{r}{0.4\textwidth}
\vspace{-1.0em}
\centering
\includegraphics[width=0.4\textwidth]{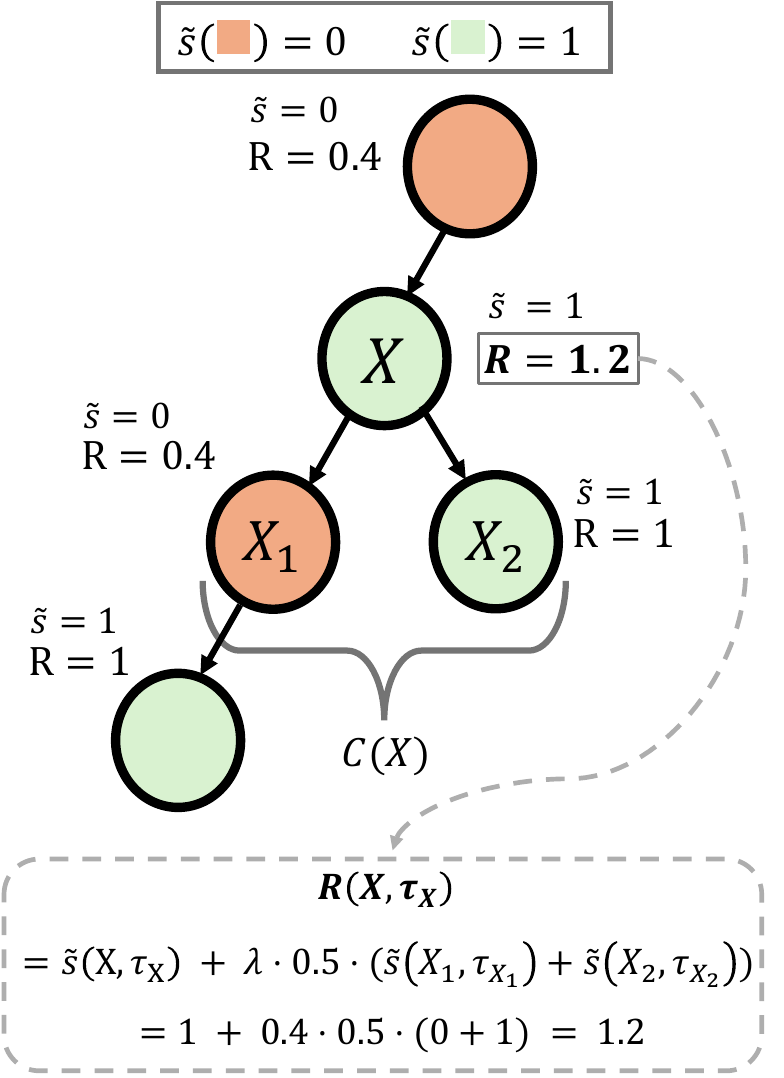}
\vspace{-0.8em}
\caption{
\method{} Reward Design. Each node receives a local reward from its own success and a delegation bonus from the success rate of its children. The example uses $\lambda=0.4$.
}
\label{fig:rao-credit-assignment}
\vspace{-1.0em}
\end{wrapfigure}

A recursive agent should learn both to solve its assigned task and, when useful, to delegate productively. Ideally, we can take advantage of node-local credit assignment: a sub-agent should receive signal from whether its own assigned task and its delegated sub-tasks were solved successfully, rather than relying on the root agent's outcome-level reward. Such local signals can substantially improve credit assignment for decomposition, especially early in training when root-task success may be rare or when the initialized policy does not yet make meaningful use of delegation. At the same time, in many environments fine-grained verification is unavailable or expensive, so direct success signals for intermediate sub-agents may not always be observable.

To accommodate both cases, we distinguish between the underlying notion of task success and the supervisory signal actually used during training each sub-agent. For a node corresponding to task $X$, let $\tau_X$ denote the trajectory generated while solving $X$, and let $C(X)$ denote the set of its immediate children. We use $\tilde{s}(X,\tau_X) \in [0,1]$
for the success signal used in training for node $X$. Depending on the environment, $\tilde{s}(X,\tau_X)$ may be instantiated in different ways: it may come from exact sub-task verification, from a learned or LLM-based judge, or, when no node-local supervision is available, from a proxy such as the success of the root task. 
RAO's credit assignment becomes more faithful as $\tilde{s}(X,\tau_X)$ more accurately reflects the success of the corresponding node-local task $X$.

We define the reward for a given node as:
\begin{equation}
\label{eq:rao_reward}
R(X,\tau_X)
=
\underbrace{\tilde{s}(X,\tau_X)}_{\text{success(X) / proxy}}
+
\underbrace{\lambda \cdot
\frac{1}{|C(X)|}
\sum_{c \in C(X)} \tilde{s}(c,\tau_c)}_{\text{delegation bonus}},
\end{equation}
with the convention that the second term is zero when $|C(X)| = 0$. Here $\lambda \ge 0$ controls the strength of the delegation bonus.

The first term rewards the agent for solving its own assigned task according to the available supervisory signal. The second term rewards it when the sub-tasks it creates are successfully completed by its children. Using the \emph{success rate} of immediate children, rather than the raw number of successful children, avoids directly rewarding the policy for spawning more children purely to collect additional bonus. This makes the delegation bonus better aligned with the quality of delegation rather than the quantity of delegation.

The reward is local: every node, whether root, internal, or leaf, is scored using the same rule based only on its own task signal and the signals of its immediate children. Setting $\lambda = 0$ recovers a purely local-success-based reward. In practice, the delegation bonus is most useful in regimes where the initial policy under-utilizes delegation and needs additional signal to learn when and how to decompose. When the policy already delegates sufficiently at initialization, we can simply set $\lambda=0$ and train with the node-level success signal alone.  Fig.~\ref{fig:rao-credit-assignment} shows an example of reward computation with $\lambda=0.4$.

\subsubsection{Policy Optimization Objective}

Recursive execution induces a family of related task distributions across depths. Let $\mathcal{D}_0$ denote the root task distribution, and let $\mathcal{D}_d(\theta)$ for $d \ge 1$ denote the distribution of depth-$d$ sub-tasks generated by recursively applying the current policy. Training therefore optimizes a shared policy over both root tasks and policy-generated descendants:
\begin{equation}
\label{eq:rao_objective}
J(\theta)
=
\sum_{d=0}^{D}
\mathbb{E}_{X \sim \mathcal{D}_d(\theta)}
\left[
\mathbb{E}_{\tau_X \sim \pi_\theta(\cdot \mid X)}
\big[
R(X,\tau_X)
\big]
\right].
\end{equation}
This view helps explain why recursive training can be effective: the same parameters are trained across a hierarchy of related tasks, and the induced sub-tasks are often simpler or more structured than the original root task, generating a natural curriculum.

For each root task, we sample $G$ independent recursive rollout trees $\{\mathcal{T}^{(g)}\}_{g=1}^G$. Let $R_{\mathrm{root}}^{(g)}$ denote the reward of the root node in rollout $g$. For any trajectory $\tau$ belonging to rollout $g$, we define its advantage using a leave-one-out baseline over root rewards:
\begin{equation}
\label{eq:rao_advantage}
A(\tau^{(g)})
=
R(\tau^{(g)}) - b_{-g},
\qquad
b_{-g}
=
\frac{1}{G-1}
\sum_{g' \neq g}
R_{\mathrm{root}}^{(g')}.
\end{equation}
We use the same root-group baseline for all trajectories within a rollout tree, including child trajectories. Although this may not be the lowest-variance baseline, especially when each sub-task is different, it places all nodes generated under the same root task on a common reference scale and is practical by avoiding the need for a critic or constructing separate comparison groups over policy-generated sub-tasks, which generally differ across rollouts. Furthermore, a standard leave-one-out argument shows that this baseline is unbiased and hence sound; we defer the proof to Appendix~\ref{app:lemma-unbiased}.

A practical issue is that recursive rollouts may contain very different numbers of trajectories at different depths. In some domains, sub-agent trajectories can greatly outnumber root trajectories, causing optimization to become dominated by deeper parts of the execution tree if all trajectories are simply pooled together. To mitigate this effect, we use \emph{depth-level inverse-frequency weighting}, which downweights trajectories from depths that are overrepresented in the batch while preserving the overall scale of the update.

Let $\mathcal{B}_d$ denote the set of all depth-$d$ trajectories in the batch, and let $N_d = |\mathcal{B}_d|$. We assign each trajectory at depth $d$ a weight
\begin{equation}
\label{eq:rao_depth_weight}
w_d = \alpha \cdot \frac{1}{N_d},
\qquad
\alpha = \frac{\sum_{d=0}^{D} N_d}{\sum_{d=0}^{D} N_d \cdot \frac{1}{N_d}},
\end{equation}
where $\alpha$ is a normalization constant chosen so that the total weight over the batch is preserved. The corresponding estimator is
\begin{equation}
\label{eq:rao_gradient_weighted}
\hat{\nabla} J(\theta)
=
\sum_{d=0}^{D}
\sum_{\tau \in \mathcal{B}_d}
w_d \, A(\tau)\,\nabla_\theta \log \pi_\theta(\tau).
\end{equation}

Intuitively, this assigns smaller weight to trajectories from depths that appear more frequently in the batch, reducing the tendency of heavily populated levels of the tree to dominate learning, while the normalization keeps the overall update magnitude approximately unchanged. %

\begin{tcolorbox}[
    enhanced,
    colframe=black,
    colback=evalAll!35,
    boxrule=0.9pt,
    arc=3pt,
    left=8pt,
    right=8pt,
    top=7pt,
    bottom=7pt,
    before skip=8pt,
    after skip=8pt,
    title=\textbf{Summary: RAO},
    coltitle=white,
    colbacktitle=black,
    colframe=black,
    fonttitle=\bfseries,
    titlerule=0pt,
    toptitle=3pt,
    bottomtitle=3pt,
    lefttitle=8pt,
    righttitle=8pt
]
\begin{enumerate}[
    leftmargin=2em,
]
    \item Trains a \textbf{single, shared policy} to act across recursive rollouts with dynamically generated execution trees.
    \item Provides \textbf{dense credit assignment} through a local reward for each node in the recursive execution tree (Eq.~\ref{eq:rao_reward}).
    \item Computes advantages by comparing each node's local reward to a \textbf{leave-one-out baseline computed from root rollout rewards} (Eq.~\ref{eq:rao_advantage}).
    \item Optimizes a \textbf{weighted, multi-task objective} over tasks sampled from different depths of recursive rollouts, yielding a \textbf{self-induced curriculum} (Eqs.~\ref{eq:rao_objective} and \ref{eq:rao_gradient_weighted}).
\end{enumerate}
\end{tcolorbox}

\section{Experiments}
\label{sec:experiments}

We evaluate \method{} on three benchmarks and give an overview below. The full action space for each of the benchmarks is provided in Appendix~\ref{app:action-space}, while additional experiment details and hyperparameters are provided in Appendix~\ref{app:exp-details}.

\subsection{TextCraft-Synth}
\begin{wrapfigure}{r}{0.28\textwidth}
    \vspace{-2em}
    \centering
    \includegraphics[width=0.28\textwidth]{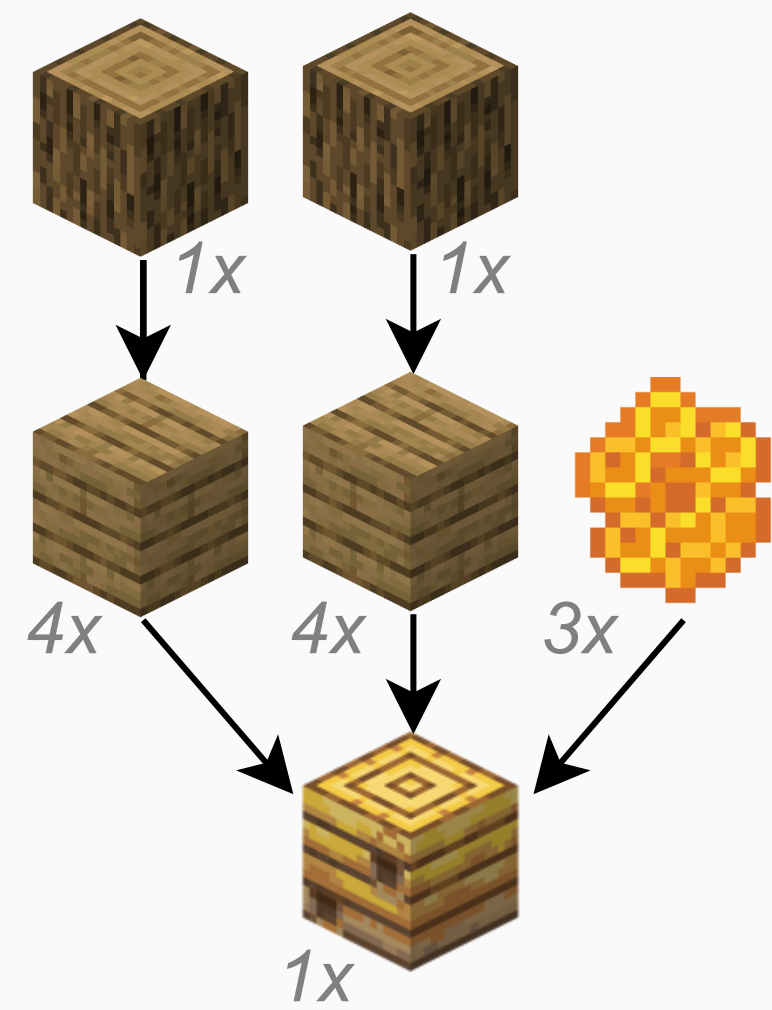}
    \vspace{-0.8em}
    \caption{Figure adapted from~\cite{adapt} visualizing a \textsc{TextCraft} crafting tree. In order to craft the target beehive, the agent must first craft oak planks from oak logs.}
    \label{fig:textcraft}
    \vspace{-1.0em}
\end{wrapfigure}
To study the properties of recursive-agent training in a controlled setting, we introduce \textcraft, inspired by the \textsc{TextCraft} benchmark from \citet{adapt}. In \textsc{TextCraft}, an agent is given an initial inventory and a target item to craft using Minecraft recipes. \textcraft~is a more challenging testbed: we expand the action space with additional APIs, such as recipe search, and replace Minecraft recipes with synthetically generated items and dependencies. This synthetic construction lets us scale the benchmark to arbitrarily deep tasks. Crafting is naturally compositional and recursive, since an item required for the target may itself need to be crafted through multiple intermediate steps. We include tasks of varying difficulty and length by controlling the depth of the underlying crafting tree: \textit{Easy} tasks have depth 2--3, \textit{Medium} tasks depth 4--6, and \textit{Hard} tasks depth 7--9.

For \textcraft, we study two training settings. In the \emph{constrained-context} setting, we limit the model to an 8K context window during both training and inference to test whether RAO can exploit the effectively longer horizon enabled by recursive decomposition. In the \emph{unconstrained} setting, we use a 40K context window during training and 256K during inference. In both settings, we train only on medium-difficulty tasks and evaluate generalization to easy and hard tasks at test time. We use Qwen-3-4B-Instruct-2507~\citep{qwen3} for all \textcraft~experiments. Notably, 40K tokens is already more than sufficient for easy and medium tasks, and 256K is sufficient for all tasks; so in the unconstrained setting the single-agent baseline is not limited by context length. We do not include a delegation bonus ($\lambda=0$) since the agent obtained at initialization from the base model is already able to solve at least a few tasks by employing recursion to some extent. We limit maximum recursion depth to 6 during training, but allow up to a depth of 12 for evaluation. 

\subsection{Oolong-Real}

We also evaluate on \oolong~\citep{oolong}, a long-context benchmark that requires aggregating information from very long Dungeons \& Dragons transcripts. We chose this setting because long-context reasoning is central to many practical agent tasks: for example, a software agent may need to identify the cause of a failure in a very long log file, while a research agent may need to synthesize information across many documents.

For this benchmark, we train Qwen3-VL-30B-A3B-Instruct~\citep{qwen3} with a 32K context limit. This limit is imposed by our training infrastructure: we use the Tinker service~\citep{tinker}, which supports at most 32K tokens for this model. Since \oolong~instances require processing documents of at least 55K tokens, a single-agent baseline cannot read the full input and must instead rely on heuristics such as regex, string matching, or selectively printing portions of the document. A recursive agent, by contrast, can process the entire input by spawning sub-agents over chunks, each with a fresh context window. This therefore provides a realistic test of whether recursive agents can overcome practical context constraints through decomposition. To speed up training, we only train on input contexts with lengths $<=240$K characters ($\sim$60K tokens), but evaluate on contexts up to $220$K tokens. Both training and inference were constrained to a maximum recursion depth of 2 (0-indexed; 3 levels including the root agent). In contrast to the 32K training context window limit, for inference during evaluation, we use a context window of 256K tokens.

For the root-task reward, we follow the \oolong~scoring procedure, which uses partial credit for numerical answers, exact match for string answers, and overlap-based partial credit for list answers. For sub-agent success signal, unlike \textcraft~we do not have perfect verifiers, and so instead rely on an LLM-judge using GPT-5-mini~\citep{gpt-5-mini} (prompt provided in Appendix~\ref{app:oolong-prompts}). To help warm start training, we also use a delegation bonus with $\lambda=0.4$. Similar to \citet{recursive_language_models}, we do not place the entire input directly into the agent prompt. Instead, we pre-populate the agent's Python interpreter with a \texttt{context} string that the agent can interact with programmatically. The recursive agent can then pass smaller chunks to sub-agents via \texttt{launch\_subagent(goal: str, context: str)}.

\subsection{DeepDive}\label{sec:deepdive}

Finally, we benchmark \method~on the deep research domain using the \deepdive{} dataset~\citep{deepdive}. The dataset contains challenging QA pairs constructed by performing controlled walks over knowledge graphs and generating questions that require multi-hop, iterative web searches and synthesis over information scattered across the web to answer. In particular, answering \deepdive{} questions requires breaking down the question into a chain of sequentially dependent sub-queries. A representative example is provided below:
\begin{quote}
	
\small\textbf{Question:} A "Historic State" in Southeast Asia, known for an early dynasty, employed high administrators. One such advisor (active late 15th c.) notably forgave a royal scion who harmed his child. This scion became the state's last sovereign before its early 16th c. European conquest. This final sovereign is linked to a legend: a mystical woman on a high peak made demands of his ruling predecessor, including the sovereign's own youthful life-essence, for marriage. This legend became a film. What year was this film, detailing these conditions, released? \textbf{Answer:} 1962.
\end{quote}

We train Qwen-3-4B-Instruct-2507 on this task with a 40K context limit while training and a 256K context window during evaluation. Both training and evaluation use a maximum recursion depth of 4. We set $\lambda=0$, like for \textcraft{} and similar to \oolong{}, we use an LLM-judge with GPT-5-mini for scoring sub-agent trajectories  (prompt provided in Appendix~\ref{app:deepdive-prompts}).

Beyond the delegation action, the agent has access to functions for searching the web and reading URL content\footnote{We leverage \href{https://www.tavily.com/}{Tavily} APIs to implement these web related actions.}.

\begin{table*}[t]
\centering
\caption{\textcraft~results across evaluation difficulties. Success rate (SR) is computed over the evaluation set. Steps and wall-clock time are computed over the intersection of tasks successfully solved by both methods at each difficulty.}
\label{tab:textcraft-synth-side-by-side}
\footnotesize
\setlength{\tabcolsep}{4pt}

\begin{minipage}[t]{0.46\textwidth}
\centering
\textbf{(a) Context Window: 8K train, 8K eval}\vspace{2pt}

\begin{NiceTabular}{llccc}[cell-space-top-limit=2pt,cell-space-bottom-limit=2pt]
\toprule
Difficulty & Method & SR & Steps & Time (s) \\
\midrule
\Block[fill=evalAll!35]{2-1}{\centering All}
  & Single    & 0.24          & 16 & 7.1 \\
  & Recursive & \textbf{0.95} & 33 & 9.9 \\
\cmidrule(lr){1-5}

\Block[fill=evalEasy!35]{2-1}{\centering Easy}
  & Single    & 0.55          & 12 & 5.2 \\
  & Recursive & \textbf{1.0}  & 23 & 8.0 \\
\cmidrule(lr){1-5}

\Block[fill=evalMedium!35]{2-1}{\centering Medium}
  & Single    & 0.17          & 25 & 11.1 \\
  & Recursive & \textbf{0.96} & 52 & 13.6 \\
\cmidrule(lr){1-5}

\Block[fill=evalHard!35]{2-1}{\centering Hard}
  & Single    & 0.0           & -- & -- \\
  & Recursive & \textbf{0.88} & -- & -- \\
\bottomrule
\end{NiceTabular}
\end{minipage}
\hfill
\begin{minipage}[t]{0.46\textwidth}
\centering
\textbf{(b) Context Window: 40K train, 256K eval}\vspace{2pt}

\begin{NiceTabular}{llccc}[cell-space-top-limit=2pt,cell-space-bottom-limit=2pt]
\toprule
Difficulty & Method & SR & Steps & Time (s) \\
\midrule
\Block[fill=evalAll!35]{2-1}{\centering All}
  & Single    & 0.73          & 54  & 35.7 \\
  & Recursive & \textbf{0.96} & 115 & 19.8 \\
\cmidrule(lr){1-5}

\Block[fill=evalEasy!35]{2-1}{\centering Easy}
  & Single    & 0.97          & 11  & 6.6 \\
  & Recursive & \textbf{1.0}  & 21  & 8.8 \\
\cmidrule(lr){1-5}

\Block[fill=evalMedium!35]{2-1}{\centering Medium}
  & Single    & 0.87          & 60  & 38.1 \\
  & Recursive & \textbf{0.98} & 109 & \textbf{20.9} \\
\cmidrule(lr){1-5}

\Block[fill=evalHard!35]{2-1}{\centering Hard}
  & Single    & 0.20          & 252 & 180.0 \\
  & Recursive & \textbf{0.88} & 694 & \textbf{73.3} \\
\bottomrule
\end{NiceTabular}
\end{minipage}
\end{table*}

\begin{figure}[t]
    \centering
    \includegraphics[width=0.85\linewidth]{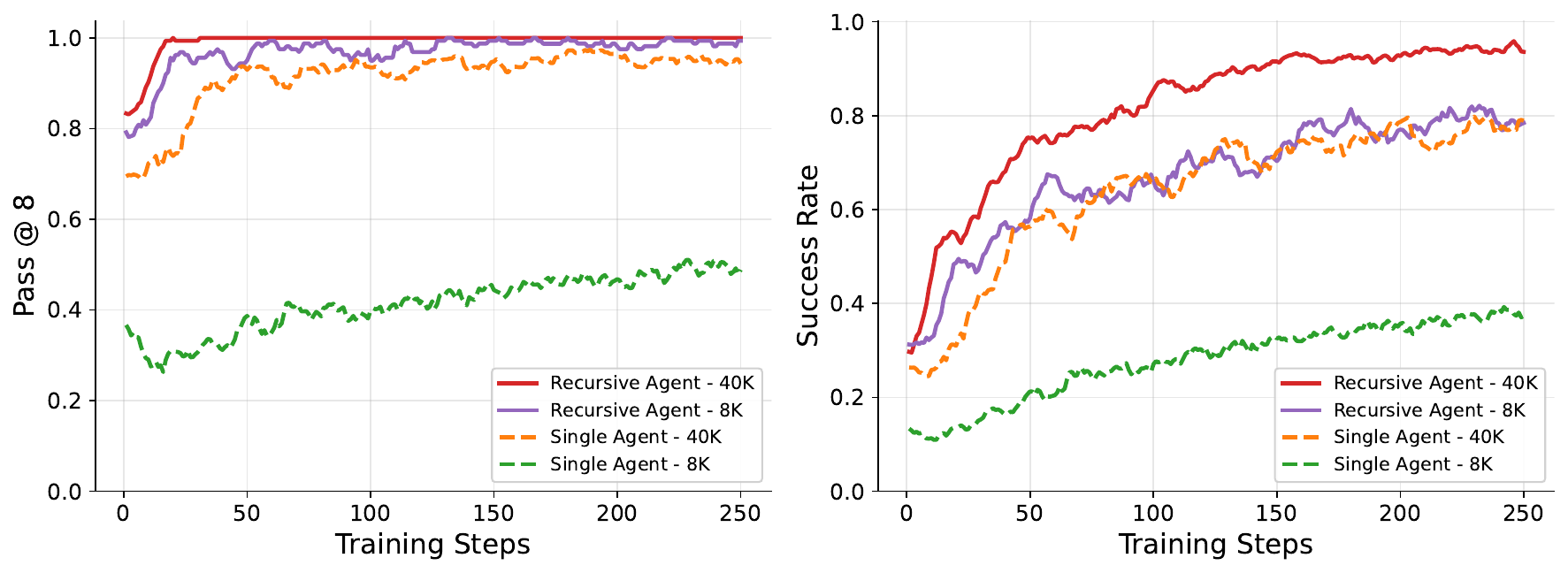}
   \caption{\textcraft~training curvess (moving average; window size 10).}
    \label{fig:textcraft-curves}
\end{figure}

\begin{table*}[t]
\centering
\caption{\textsc{Oolong-real} results across context lengths from 55K-175K. Average reward is reported on a sample of 650 bucketed into 55K-175K average input context lengths. Steps and wall-clock time are reported on the common non-zero-score intersection.}
\label{tab:oolong-real-results}
\footnotesize
\setlength{\tabcolsep}{4pt}

\begin{NiceTabular}{lcccccc}[cell-space-top-limit=2pt,cell-space-bottom-limit=2pt]
\toprule
Method &
\Block[fill=evalAll!35]{1-1}{\centering Avg.} &
\Block[fill=evalEasy!35]{1-1}{\centering 55K} &
\Block[fill=evalMedium!35]{1-1}{\centering 118K} &
\Block[fill=evalHard!35]{1-1}{\centering 175K} &
\Block[fill=black!8]{1-1}{\centering Steps} &
\Block[fill=black!8]{1-1}{\centering Time (s)} \\
\midrule

Single    & 0.203 & 0.351 & 0.183 & 0.129 & \textbf{7.1}  & \textbf{12.6} \\
Recursive & \textbf{0.320} & \textbf{0.454} & \textbf{0.315} & \textbf{0.249} & 61.5 & 175.4 \\
\bottomrule
\end{NiceTabular}
\end{table*}

\begin{figure}[t]
    \centering
    \includegraphics[width=0.8\linewidth]{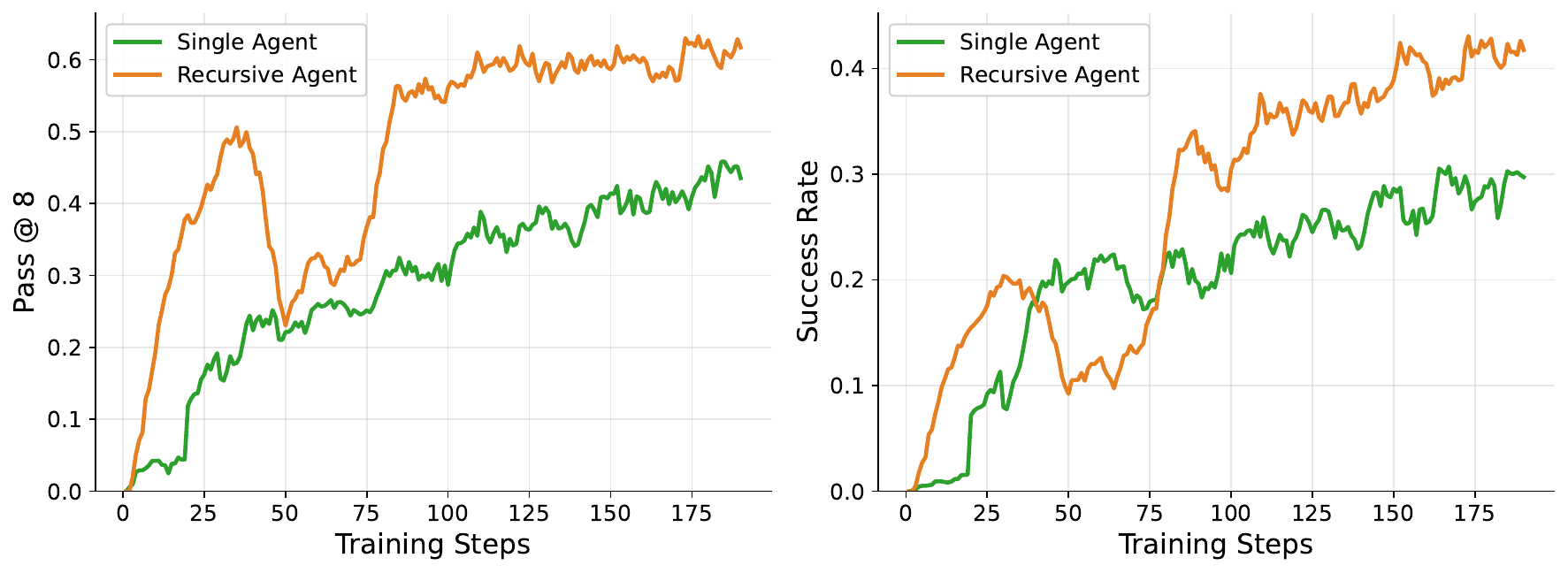}
   \caption{\oolong~training curves (moving average; window size 10).}
    \label{fig:oolong-curves}
\end{figure}

\begin{table*}[t]
\centering

\begin{minipage}[t]{0.58\textwidth}
    \centering
    \vspace{0pt}
    \includegraphics[width=\linewidth]{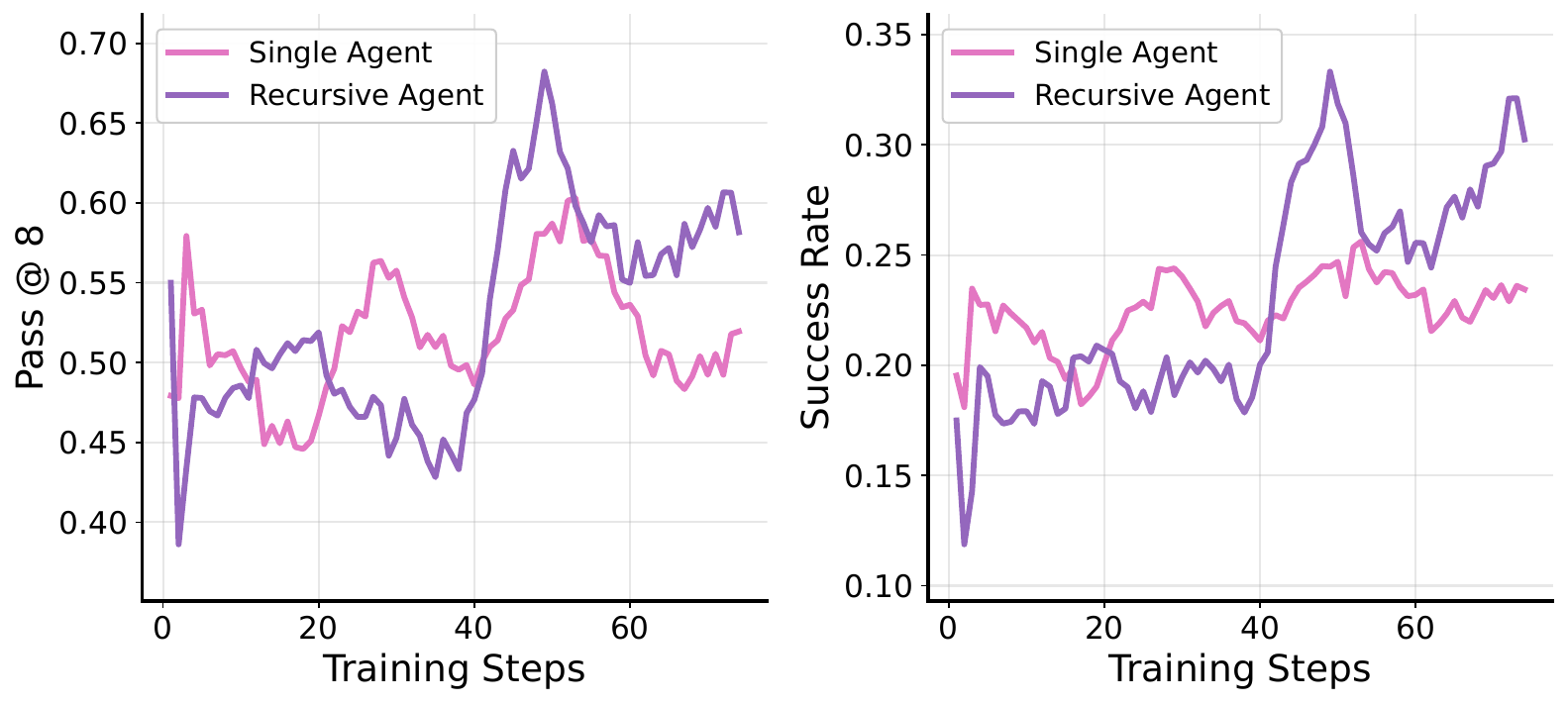}
    \captionof{figure}{\deepdive~training curves (moving average; window size 10)}
    \label{fig:deepdive-curves}
\end{minipage}
\hfill
\begin{minipage}[t]{0.38\textwidth}
    \centering
     \vspace{0.27in}
    \caption{\textsc{DeepDive} evaluation results on 50 held-out tasks. Steps and wall-clock time are reported on tasks solved by both methods.}
    \label{tab:deepdive-results}
    \footnotesize
    \setlength{\tabcolsep}{4pt}

    \begin{NiceTabular}{lccc}[cell-space-top-limit=2pt,cell-space-bottom-limit=2pt]
    \toprule
    Method &
    \Block{1-1}{\centering SR} &
    \Block{1-1}{\centering Steps} &
    \Block{1-1}{\centering Time (s)} \\
    \midrule
    Single    & 0.24 & \textbf{5.2}   & \textbf{13.3} \\
    Recursive & \textbf{0.40} & 121.2 & 233.0 \\
    \bottomrule
    \end{NiceTabular}
\end{minipage}

\end{table*}

\section{Discussion of Experimental Results}\label{sec:discussion}

Across all three benchmarks, recursive agents trained with RAO outperform flat single-agent
baselines. On \textsc{TextCraft-Synth}, recursion improves success rates in both the
constrained-context and unconstrained-context settings, with especially large gains on hard
tasks (Table~\ref{tab:textcraft-synth-side-by-side}). On \textsc{Oolong-Real}, recursive agents achieve
higher average reward across all evaluated context-length buckets despite the 32K training
context limit (Table~\ref{tab:oolong-real-results}). On \textsc{DeepDive}, recursion improves
held-out success rate from 0.24 to 0.40 (Table~\ref{tab:deepdive-results}). Together, these
results suggest that RAO is effective at training agents to exploit recursive execution across
diverse task structures. Below, we discuss specific takeaways in detail.

\textbf{Recursive agents can solve tasks that exceed the model's context window.}
We observe that recursive agents can solve tasks even when limiting training to an 8K context window in \textcraft~in Fig.~\ref{fig:textcraft-curves}. Despite this constraint, training with RAO learns the task effectively and recovers most of the performance of models trained with 40K context. In contrast, the single-agent baseline trained with 8K context lags far behind in both success rate and pass@8. The same pattern appears in Table~\ref{tab:textcraft-synth-side-by-side} under constrained evaluation: the single-agent baseline reaches only 24\% overall success and 0\% on hard tasks, whereas the recursive model with 8K context matches the unconstrained recursive model, achieving 95\% overall success and 88\% on hard tasks.

We observe similar evidence on \oolong. Although these models are trained with a 32K context limit, the tasks require processing at least $\sim$55K tokens. Fig.~\ref{fig:oolong-curves} shows that the recursive agent leverages recursion to effectively expand its usable context and achieves a large gap over the single-agent baseline. We interestingly observe a transient failure mode around training steps 40--80, where the recursive agent briefly learns to print the entire input in the root context, exhausting its context window. It then quickly recovers by relearning the correct strategy: chunking the input and delegating smaller context chunks to sub-agents.

\textbf{Recursive agents enjoy better training efficiency even when not context constrained.}
Even when the model is given a large enough context window to solve the task directly, recursive agents learn substantially faster. On \textcraft~, Fig.~\ref{fig:textcraft-curves} shows that the recursive agent trained with 40K context attains a higher pass@8 much earlier in training and maintains a strong lead in success rate over the single-agent approach. As discussed in Section~\ref{sec:rao}, we attribute this to the structured sub-agent rewards that serve a role akin to dense process rewards and the self-generated curriculum induced by recursive execution. On \deepdive{} (Fig.~\ref{fig:deepdive-curves}), we also observe that the single agent approach learns much more slowly compared to the recursive agent, ending up with a success rate gap of more than 8 percentage points after just 75 steps of training, with the trend suggesting that the gap would further increase with more training. The training performance gap also translates to the \deepdive{} evaluation results: Table~\ref{tab:deepdive-results} shows that the recursive agent beats the single agent by 16 percentage points in success rate.

\paragraph{Recursive agents generalize better to harder tasks.}
All \textcraft~runs are trained only on medium-difficulty problems. Table~\ref{tab:textcraft-synth-side-by-side} shows that while both single-agent and recursive models generalize well to easy tasks, only the recursive agents perform strongly on hard tasks (88\% vs.\ 20\% success rate). We see a similar trend on \oolong: although training uses a filtered prompt set requiring fewer than $\sim$60K tokens, the recursive agent maintains a substantial lead over the single agent even on inputs up to 175K tokens (Table~\ref{tab:oolong-real-results}).

Notably, our 30B recursive model approaches the \oolong~performance of much larger frontier models, including Claude-Sonnet-4 (0.37), o3 (0.37), and GPT-5-mini (0.35) \citep{oolong}. We hypothesize that this stronger test-time scaling comes from teaching the model a divide-and-conquer strategy that naturally transfers to harder problems.

\paragraph{Recursive agents can solve parallelizable tasks faster.}
Beyond improving capability, recursion can also reduce wall-clock time when tasks decompose into independent subtasks that can be executed concurrently. This advantage becomes more pronounced as task difficulty increases. In Table~\ref{tab:textcraft-synth-side-by-side} for \textcraft, the single-agent baseline is slightly faster on easy tasks, but the recursive agent is 1.8$\times$ and 2.5$\times$ faster on medium and hard tasks, respectively, despite taking 1.8$\times$ and 2.8$\times$ more steps.

We do not observe the same advantage on \oolong, where the single-agent baseline is 14$\times$ faster. However, this reflects a qualitative difference in strategy rather than an advantage in solving the task correctly. Because training is limited to 32K context, the single-agent baseline cannot print the full 55K+ token input in context and instead learns to rely almost entirely on fast but inaccurate programmatic heuristics such as regex and string matching. The recursive agent, by contrast, learns the more reliable strategy: printing out and reading the context in chunks via sub-agents. This yields much higher accuracy, but at the cost of longer execution time.

On \deepdive{}, while recursive agents achieve a much higher success rate, they are 18$\times$ slower on tasks solved by both methods (Table~\ref{tab:deepdive-results}). This is because \deepdive{} tasks decompose into sequentially dependent sub-tasks instead of parallelizable ones, as described in Section~\ref{sec:deepdive}. In fact, while 83.9\% of \textcraft{}'s \texttt{launch\_subagent} calls were executed concurrently, only 1.6\% of \deepdive{} delegations were concurrent.

\paragraph{Recursive agents adapt their delegation depth to the task.}
\begin{wrapfigure}{r}{0.38\textwidth}
    \vspace{-2em}
    \centering
    \includegraphics[width=0.38\textwidth]{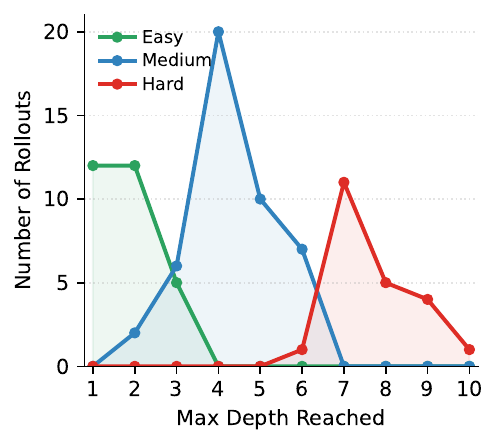}
    \vspace{-0.6cm}
    \caption{Maximum delegation depth on \textcraft.}
    \label{fig:textcraft-depth-plot}
    \vspace{-0.7cm}
\end{wrapfigure}
RAO also teaches agents \emph{when} to delegate and \emph{how much} to delegate. Fig.~\ref{fig:textcraft-depth-plot} plots the maximum depth reached by successful \textcraft~rollouts from the recursive agent. The resulting depth distributions align closely with the task-depth patterns described in Section~\ref{sec:experiments}, suggesting that the agent learns difficulty-appropriate delegation behavior. Moreover, although training caps the maximum depth at 6, the agent still learns to scale to greater depths when solving hard problems.

On \oolong, nearly all successful rollouts have maximum depth 1. This is intuitive: long-context aggregation tasks are best solved by chunking the context once and delegating each chunk to a single layer of children. 
On \deepdive, the average, maximum depth on tasks solved by both the single agent and recursive agent is 2.9, while the depth on tasks uniquely solved by the recursive agent is 4. This suggests that the lift in success for recursive agent on \deepdive{} stems from the fact that it allocates more test-time compute on harder tasks. Overall, RAO learns task-appropriate delegation strategies rather than applying recursion uniformly.

\section{Ablation Experiments}

We perform an ablation study to understand the contribution of reward design and
depth-level inverse-frequency weighting in RAO. For this study, we train different
variants on \textsc{Textcraft-Synth} medium problems for 100 optimization steps.

\noindent\textbf{Reward Design.}
We compare two reward variants. In \emph{Only Root Rewards (Sparse)}, we disable
subagent success rewards and instead propagate the root-agent reward to each
subagent. In \emph{Both Root and Subagent Rewards (Dense)}, both root agents and
subagents receive their own task-specific rewards.

\noindent\textbf{Trajectory Weighting.}
We compare variants with depth-level inverse-frequency weighting enabled
(\emph{Weighted}) or disabled (\emph{Unweighted}).

\begin{wrapfigure}{r}{0.4\textwidth}
    \centering
    \vspace{-1.0em}
    \includegraphics[width=0.4\textwidth]{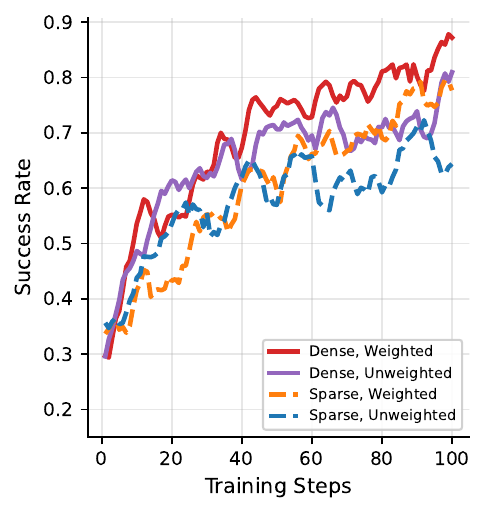}
    \caption{Ablation of RAO design choices on \textcraft.}
    \label{fig:textcraft-ablation}
    \vspace{-1.4cm}
\end{wrapfigure}
Fig.~\ref{fig:textcraft-ablation} shows the training curves for these experiments.
Dense rewards and trajectory weighting are both important for achieving the best
training efficiency and performance.

\section{Related Work}
\label{sec:related-work}

\textbf{Recursive and multi-agent inference scaffolds.}
Recursive and multi-agent scaffolds are increasingly used to scale LLM agents at inference time. 
Deployed coding agents such as Claude Code and Codex expose subagents as a practical delegation primitive for context isolation, parallel work, and specialized tool use~\citep{anthropic_subagents,codex_subagents}. 
Research systems such as ADaPT and THREAD support deeper recursive decomposition, allowing agents to recursively break tasks into simpler subproblems~\citep{adapt,thread}. 
However, these systems primarily treat recursion as an inference-time scaffold around a fixed model, rather than training the model to use recursive execution effectively.

\textbf{Training agents to delegate.}
A smaller line of recent work has begun to train models to use delegation. Our agent scaffold is closest to Recursive Language Models (RLMs), which also extend a CodeAct-style Python REPL agent \citep{codeact} with a delegation function~\citep{recursive_language_models,codeact}. 
However, \cite{recursive_language_models} primarily study depth-1 delegation as an inference-time primitive; their training experiment is small-scale, uses supervised fine-tuning, and trains only the root agent to delegate. 
\begin{wraptable}{r}{0.55\textwidth}
\centering
\caption{Comparison to recursive agent systems. Here, $D{>}1$ denotes experimentation with recursive execution beyond depth 1, Parallel denotes concurrent subagent execution, Joint denotes training both root agents and subagents, Real denotes evaluation on complex agentic applications.}
\label{tab:related-recursive-agents}
\footnotesize
\setlength{\tabcolsep}{3pt}
\begin{tabular}{lccccc}
\toprule
Method &
$D{>}1$ &
RL &
Parallel &
Joint &
Real \\
\midrule
ADaPT / THREAD      & \cmark & \xmark & \xmark & \xmark & \xmark \\
RLMs                & \xmark & \xmark & \xmark & \xmark & \cmark \\
Claude Code / Codex & \xmark & \qmark      & \cmark & \qmark      & \cmark \\
AsyncThink          & \xmark & \cmark & \cmark & \cmark & \xmark \\
Context-Folding     & \xmark & \cmark & \xmark & \cmark & \cmark \\
Kimi K2.5 (PARL)             & \xmark & \cmark & \cmark & \xmark & \cmark \\
\midrule
RAO                 & \cmark & \cmark & \cmark & \cmark & \cmark \\
\bottomrule
\end{tabular}
\vspace{-1.0em}
\end{wraptable}
Similarly, Kimi K2.5 trains a depth-1 agent-swarm system with but with reinforcement learning and supports parallel subagent execution; though they update only the root/orchestrator while keeping worker subagents fixed to an older checkpoint~\citep{kimi}. 
Context-Folding and AsyncThink train both organizers and workers with reinforcement learning, but remain depth-1 systems focused on restricted recursion patterns or narrower domains such as context management, Countdown, or mathematical reasoning~\citep{context_folding,asyncthink}. 
To our knowledge, RAO is the first to jointly study recursive agent training with all of the following: recursion beyond depth 1, end-to-end RL, asynchronous subagent execution, joint training of root agents and subagents, and evaluation on complex agentic applications.

\textbf{Hierarchical reinforcement learning, process rewards, and automatic curricula.}
RAO is also connected to hierarchical reinforcement learning, which studies temporal abstraction and learned subpolicies through frameworks such as options, MAXQ, Feudal RL, and HIRO~\citep{options,maxq,feudal,hiro}. 
Unlike classical HRL, our subtasks are expressed in natural language and generated online by the same language model policy rather than through fixed action abstractions or separate high- and low-level policies. 
Our use of node-local supervision also relates to process rewards and step-level verification~\citep{lets_verify}. 
Finally, recursively generated subtasks induce a curriculum over related task distributions, connecting RAO to automatic curriculum learning methods such as AMIGo~\citep{amigo}. 
In RAO, however, the curriculum is generated cooperatively as part of solving an overall task, rather than adversarially by a separate teacher proposing goals for a student. 
RAO can therefore be interpreted as combining natural-language temporal abstraction, local credit assignment over recursive trees, and shared-parameter learning across a policy-induced hierarchy of task distributions.

\section{Conclusion}

We introduced \methodlong{} (\method), a reinforcement learning approach for training language-model agents to use recursive delegation as a learned inference-time primitive. Rather than wrapping a fixed model in a hand-designed recursive scaffold, \method{} trains a single shared policy across all nodes of a dynamically generated execution tree. The resulting agent learns not only to solve assigned tasks, but also to decide when to delegate, how to specify useful sub-tasks, how to coordinate recursive copies of itself, and how to aggregate their outputs. Across three domains, \method{} consistently unlocks capabilities that are difficult to obtain from flat single-agent execution alone, including better success rates, exploiting parallelism where possible, generalization to harder problems, and solving tasks over horizons far beyond the model's context window.

\method{} also raises several immediate directions for scaling recursive agents beyond the settings studied here. With agents that can allocate computation through delegation, evaluation should move beyond final success alone and measure how efficiently agents use inference-time compute. While \method{} trains a single shared model across all nodes of the recursive tree, practical systems may benefit from heterogeneous recursion, where stronger models supervise, verify, or synthesize the work of smaller specialist sub-agents, or vice-versa. Finally, because our experiments train recursion separately within each domain, an important next step is to train generalist recursive agents that transfer delegation strategies across domains. This also raises a related question about task heterogeneity within a single recursive rollout: our experiments primarily focus on settings where sub-tasks resemble smaller instances of the parent task, but many real-world tasks require delegation to qualitatively different subproblems, such as retrieval, verification, implementation, debugging, or synthesis. Understanding which environments best teach decomposition, coordination, and long-horizon planning will be crucial for building broadly capable recursive agents, and will require benchmarks that stress-test ultra-long horizons, parallelizable task structure, adaptive compute allocation, and multi-agent coordination.

More broadly, our results point to a simple principle: \textit{inference-time scaffolds should not merely be designed around models; models should be trained to use them.} Recursion is one instance of this principle, but a particularly powerful one: it gives agents fresh context windows, divide-and-conquer structure, adaptive allocation of test-time compute, and opportunities for parallel execution. In this sense, recursion offers a path toward self-organizing agents.
At the same time, as recursive methods and more general multi-agent systems continue to scale, we will need to address a practical training question: how should we design surrogate sampling procedures when full recursive or multi-agent rollouts are too costly to complete inside the RL loop? Answering this will require new training scaffolds, appropriate data mixtures, and objectives that transfer from these surrogates to challenging open-ended tasks. Designing such surrogate procedures is an important direction for future work, with the potential to improve performance while reducing latency.

\section*{Acknowledgements}
We thank Amazon and Thinking Machines Lab for their very generous compute and Tinker training credits support respectively. We are also grateful to Tavily for providing us with credits used for web search APIs in our deep research experiments. Finally, we also thank many others who gave valuable feedback on earlier versions of the methods and experimental results: Pranjal Aggarwal, Zora Wang, Simran Khanuja, Amrith Setlur, Ian Wu, Daniel Fried, Andre He, Andy Liu, Atharva Naik, JY Koh, and Saujas Vaduguru at CMU, as well as Royce Cheng-Yue, Yifei Wang, Rajiv Dhawan and Sameep Tandon at Amazon. Apurva is supported by and is hugely grateful for the Amazon AI PhD Fellowship.

\bibliography{colm2026_conference}

\begin{thebibliography}{28}
\providecommand{\natexlab}[1]{#1}
\providecommand{\url}[1]{\texttt{#1}}
\expandafter\ifx\csname urlstyle\endcsname\relax
  \providecommand{\doi}[1]{doi: #1}\else
  \providecommand{\doi}{doi: \begingroup \urlstyle{rm}\Url}\fi

\bibitem[{Anthropic}(2025)]{anthropic_subagents}
{Anthropic}.
\newblock Subagents in the {Claude Code} {SDK}.
\newblock \url{https://code.claude.com/docs/en/agent-sdk/subagents}, 2025.
\newblock Accessed: 2026-05-06.

\bibitem[Bai et~al.(2025)Bai, Cai, Chen, Chen, Chen, Cheng, Deng, Ding, Gao, Ge, et~al.]{qwen3}
Shuai Bai, Yuxuan Cai, Ruizhe Chen, Keqin Chen, Xionghui Chen, Zesen Cheng, Lianghao Deng, Wei Ding, Chang Gao, Chunjiang Ge, et~al.
\newblock Qwen3-vl technical report.
\newblock \emph{arXiv preprint arXiv:2511.21631}, 2025.

\bibitem[Bertsch et~al.(2025)Bertsch, Pratapa, Mitamura, Neubig, and Gormley]{oolong}
Amanda Bertsch, Adithya Pratapa, Teruko Mitamura, Graham Neubig, and Matthew~R Gormley.
\newblock Oolong: Evaluating long context reasoning and aggregation capabilities.
\newblock \emph{arXiv preprint arXiv:2511.02817}, 2025.

\bibitem[Campero et~al.(2020)Campero, Raileanu, K{\"u}ttler, Tenenbaum, Rockt{\"a}schel, and Grefenstette]{amigo}
Andres Campero, Roberta Raileanu, Heinrich K{\"u}ttler, Joshua~B. Tenenbaum, Tim Rockt{\"a}schel, and Edward Grefenstette.
\newblock Learning with {AMIGo}: Adversarially motivated intrinsic goals, 2020.
\newblock URL \url{https://arxiv.org/abs/2006.12122}.

\bibitem[Chen et~al.(2025)Chen, Li, Gong, Jiang, Fei, Yang, Shan, Yu, Wang, Zhu, et~al.]{cispo}
Aili Chen, Aonian Li, Bangwei Gong, Binyang Jiang, Bo~Fei, Bo~Yang, Boji Shan, Changqing Yu, Chao Wang, Cheng Zhu, et~al.
\newblock Minimax-m1: Scaling test-time compute efficiently with lightning attention.
\newblock \emph{arXiv preprint arXiv:2506.13585}, 2025.

\bibitem[Chi(2025)]{asyncthink}
Zhaoxuan Chi.
\newblock The era of agentic organization: Learning to organize {LLM} reasoning with {AsyncThink}, 2025.
\newblock URL \url{https://arxiv.org/abs/2510.26658}.

\bibitem[Corbitt(2025)]{arte}
Kyle Corbitt.
\newblock {ART{\textperiodcentered}E}: How we built an email research agent that beats o3.
\newblock OpenPipe Blog, April 2025.
\newblock URL \url{https://openpipe.ai/blog/art-e-mail-agent}.
\newblock Accessed: 2026-05-05.

\bibitem[Dayan \& Hinton(1992)Dayan and Hinton]{feudal}
Peter Dayan and Geoffrey~E Hinton.
\newblock Feudal reinforcement learning.
\newblock \emph{Advances in neural information processing systems}, 5, 1992.

\bibitem[Dietterich(2000)]{maxq}
T.~G. Dietterich.
\newblock Hierarchical reinforcement learning with the {MAXQ} value function decomposition.
\newblock \emph{Journal of Artificial Intelligence Research}, 13:\penalty0 227--303, 2000.
\newblock \doi{10.1613/jair.639}.
\newblock URL \url{https://doi.org/10.1613/jair.639}.

\bibitem[Fu et~al.(2025)Fu, Gao, Shen, Zhu, Mei, He, Xu, Wei, Mei, Wang, Yang, Yuan, and Wu]{areal}
Wei Fu, Jiaxuan Gao, Xujie Shen, Chen Zhu, Zhiyu Mei, Chuyi He, Shusheng Xu, Guo Wei, Jun Mei, Jiashu Wang, Tongkai Yang, Binhang Yuan, and Yi~Wu.
\newblock Areal: A large-scale asynchronous reinforcement learning system for language reasoning, 2025.
\newblock URL \url{https://arxiv.org/abs/2505.24298}.

\bibitem[Geng \& Neubig(2026)Geng and Neubig]{asyncsoftware}
Jiayi Geng and Graham Neubig.
\newblock Effective strategies for asynchronous software engineering agents.
\newblock \emph{arXiv preprint arXiv:2603.21489}, 2026.

\bibitem[{Kimi Team} et~al.(2026){Kimi Team}, Bai, Bai, Bao, Cai, Cao, Charles, Che, Chen, Chen, et~al.]{kimi}
{Kimi Team}, Tongtong Bai, Yifan Bai, Yiping Bao, SH~Cai, Yuan Cao, Y~Charles, HS~Che, Cheng Chen, Guanduo Chen, et~al.
\newblock Kimi k2. 5: Visual agentic intelligence.
\newblock \emph{arXiv preprint arXiv:2602.02276}, 2026.

\bibitem[Lightman et~al.(2023)Lightman, Kosaraju, Burda, Edwards, Baker, Lee, Leike, Schulman, Sutskever, and Cobbe]{lets_verify}
Hunter Lightman, Vineet Kosaraju, Yura Burda, Harri Edwards, Bowen Baker, Teddy Lee, Jan Leike, John Schulman, Ilya Sutskever, and Karl Cobbe.
\newblock Let's verify step by step, 2023.
\newblock URL \url{https://arxiv.org/abs/2305.20050}.

\bibitem[LM-Provers et~al.(2026)LM-Provers, Qu, Setlur, Dekoninck, Beeching, Li, Wu, Tunstall, and Kumar]{qednano2026}
LM-Provers, Yuxiao Qu, Amrith Setlur, Jasper Dekoninck, Edward Beeching, Jia Li, Ian Wu, Lewis Tunstall, and Aviral Kumar.
\newblock Qed-nano: Teaching a tiny model to prove hard theorems.
\newblock https://huggingface.co/spaces/lm-provers/qed-nano-blogpost, 2026.
\newblock Blog post.

\bibitem[Lu et~al.(2025)Lu, Hou, Wang, Zhang, Liu, Li, Feng, Tang, and Dong]{deepdive}
Rui Lu, Zhenyu Hou, Zihan Wang, Hanchen Zhang, Xiao Liu, Yujiang Li, Shi Feng, Jie Tang, and Yuxiao Dong.
\newblock Deepdive: Advancing deep search agents with knowledge graphs and multi-turn rl.
\newblock \emph{arXiv preprint arXiv:2509.10446}, 2025.

\bibitem[Nachum et~al.(2018)Nachum, Gu, Lee, and Levine]{hiro}
Ofir Nachum, Shixiang Gu, Honglak Lee, and Sergey Levine.
\newblock Data-efficient hierarchical reinforcement learning, 2018.
\newblock URL \url{https://arxiv.org/abs/1805.08296}.

\bibitem[{OpenAI}(2026)]{codex_subagents}
{OpenAI}.
\newblock Subagents: Use subagents and custom agents in {Codex}.
\newblock \url{https://developers.openai.com/codex/subagents}, 2026.
\newblock Accessed: 2026-05-06.

\bibitem[Prasad et~al.(2024)Prasad, Koller, Hartmann, Clark, Sabharwal, Bansal, and Khot]{adapt}
Archiki Prasad, Alexander Koller, Mareike Hartmann, Peter Clark, Ashish Sabharwal, Mohit Bansal, and Tushar Khot.
\newblock {AD}a{PT}: As-needed decomposition and planning with language models.
\newblock In \emph{Findings of the Association for Computational Linguistics: NAACL 2024}, pp.\  4226--4252. Association for Computational Linguistics, 2024.
\newblock \doi{10.18653/v1/2024.findings-naacl.264}.
\newblock URL \url{https://aclanthology.org/2024.findings-naacl.264/}.

\bibitem[Schroeder et~al.(2024)Schroeder, Morgan, Luo, and Glass]{thread}
Philip Schroeder, Nathaniel Morgan, Hongyin Luo, and James Glass.
\newblock {THREAD}: Thinking deeper with recursive spawning, 2024.
\newblock URL \url{https://arxiv.org/abs/2405.17402}.

\bibitem[Singh et~al.(2025)Singh, Fry, Perelman, Tart, Ganesh, El-Kishky, McLaughlin, Low, Ostrow, Ananthram, et~al.]{gpt-5-mini}
Aaditya Singh, Adam Fry, Adam Perelman, Adam Tart, Adi Ganesh, Ahmed El-Kishky, Aidan McLaughlin, Aiden Low, AJ~Ostrow, Akhila Ananthram, et~al.
\newblock Openai gpt-5 system card.
\newblock \emph{arXiv preprint arXiv:2601.03267}, 2025.

\bibitem[Sun et~al.(2025)Sun, Lu, Ling, Liu, Yao, Yang, and Chen]{context_folding}
Weiwei Sun, Miao Lu, Zhan Ling, Kang Liu, Xuesong Yao, Yiming Yang, and Jiecao Chen.
\newblock Scaling long-horizon {LLM} agent via context-folding, 2025.
\newblock URL \url{https://arxiv.org/abs/2510.11967}.

\bibitem[Sutton et~al.(1999)Sutton, Precup, and Singh]{options}
Richard~S. Sutton, Doina Precup, and Satinder Singh.
\newblock Between {MDPs} and semi-{MDPs}: A framework for temporal abstraction in reinforcement learning.
\newblock \emph{Artificial Intelligence}, 112\penalty0 (1--2):\penalty0 181--211, 1999.
\newblock \doi{10.1016/S0004-3702(99)00052-1}.
\newblock URL \url{https://doi.org/10.1016/S0004-3702(99)00052-1}.

\bibitem[{Thinking Machines Lab}(2025)]{tinker}
{Thinking Machines Lab}.
\newblock Tinker, 2025.
\newblock URL \url{https://thinkingmachines.ai/tinker/}.

\bibitem[Wang(2026)]{rlmsoverthink}
Daren Wang.
\newblock Think, but don't overthink: Reproducing recursive language models.
\newblock \emph{arXiv preprint arXiv:2603.02615}, 2026.

\bibitem[Wang et~al.(2024)Wang, Chen, Yuan, Zhang, Li, Peng, and Ji]{codeact}
Xingyao Wang, Yangyi Chen, Lifan Yuan, Yizhe Zhang, Yunzhu Li, Hao Peng, and Heng Ji.
\newblock Executable code actions elicit better {LLM} agents, 2024.
\newblock URL \url{https://arxiv.org/abs/2402.01030}.

\bibitem[Wu et~al.(2026)Wu, Qu, Setlur, and Kumar]{wu2026reasoning}
Ian Wu, Yuxiao Qu, Amrith Setlur, and Aviral Kumar.
\newblock Reasoning cache: Continual improvement over long horizons via short-horizon rl.
\newblock \emph{arXiv preprint arXiv:2602.03773}, 2026.

\bibitem[Wu et~al.(2023)Wu, Bansal, Zhang, Wu, Li, Zhu, Jiang, Zhang, Zhang, Liu, Awadallah, White, Burger, and Wang]{autogen}
Qingyun Wu, Gagan Bansal, Jieyu Zhang, Yiran Wu, Beibin Li, Erkang Zhu, Li~Jiang, Xiaoyun Zhang, Shaokun Zhang, Jiale Liu, Ahmed~Hassan Awadallah, Ryen~W White, Doug Burger, and Chi Wang.
\newblock Autogen: Enabling next-gen {LLM} applications via multi-agent conversation, 2023.
\newblock URL \url{https://arxiv.org/abs/2308.08155}.

\bibitem[Zhang et~al.(2025)Zhang, Kraska, and Khattab]{recursive_language_models}
Alex~L. Zhang, Tim Kraska, and Omar Khattab.
\newblock Recursive language models, 2025.
\newblock URL \url{https://arxiv.org/abs/2512.24601}.

\end{thebibliography}
\bibliographystyle{colm2026_conference}

\appendix
\newpage

\section{Appendix}

\subsection{When does \method{} work best?}
The main experiments in this work are performed on tasks with non-trivial difficulty and/or length (\textcraft{}, \oolong{} and \deepdive{}). On such tasks, \method{} helps significantly with many advantages over single agent inference and training, as we discuss in detail in Section~\ref{sec:discussion}. 

But what about when tasks are both easy and short-horizon? To test \method{} in this setting, we train on an easier variant of deep research using the \arte{}~\citep{arte} dataset, where an agent needs to search over a user's emails to answer a question. In particular, unlike \deepdive{} tasks which require numerous sequentially dependent searches to find and synthesize information scattered across the internet, \arte{} tasks are much easier, requiring finding only a single relevant email in a user's inbox to answer the question.

In this setting, we find that \method{} still helps the model learn more rapidly in the initial steps of training, but then later the performance equalizes between the single and recursive agents (Fig.~\ref{fig:email-curves}). We also observe this for \textcraft{} in Table~\ref{tab:textcraft-synth-side-by-side}b where a significant gap in performance appears only for medium and higher difficulties and not on easy tasks. This is intuitive: \textit{\method{} helps the most when tasks are either difficult enough to benefit from divide-and-conquer and/or long-horizon enough to require extended context windows.}

\begin{figure}[h]
    \centering
    \includegraphics[width=0.8\linewidth]{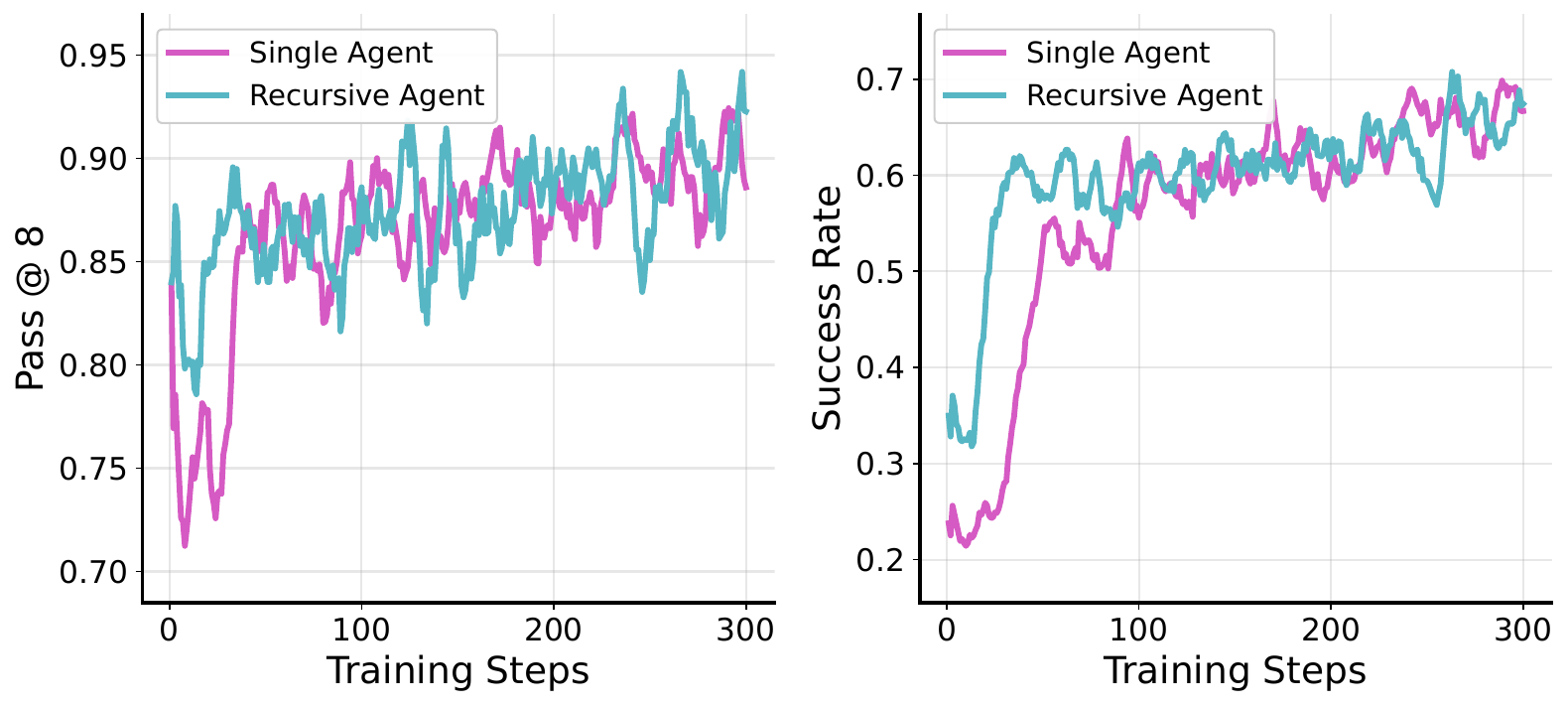}
   \caption{\arte{} (Email Search) training curves for Qwen-3-14B (moving average; window size 10).}
    \label{fig:email-curves}
\end{figure}

\subsection{Unbiased Baseline for \method}\label{app:lemma-unbiased}
\begin{lemma}[Unbiased leave-one-out baseline]
\label{lem:rao_unbiased}
Let $\tau^{(g)}$ be any trajectory in rollout tree $\mathcal{T}^{(g)}$ — root or sub-agent — and let $b_{-g}$ be the leave-one-out baseline defined in Eq.~\eqref{eq:rao_advantage}. Then
\[
\mathbb{E}\!\left[
\bigl(R(\tau^{(g)}) - b_{-g}\bigr)\,
\nabla_\theta \log \pi_\theta(\tau^{(g)})
\right]
=
\mathbb{E}\!\left[
R(\tau^{(g)})\,
\nabla_\theta \log \pi_\theta(\tau^{(g)})
\right],
\]
i.e., subtracting $b_{-g}$ does not bias the policy gradient estimator, even when $\tau^{(g)}$ is a sub-agent trajectory.
\end{lemma}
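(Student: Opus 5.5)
The plan is to reduce the claim to showing that the baseline term has zero mean,
\[
\mathbb{E}\!\left[b_{-g}\,\nabla_\theta \log \pi_\theta(\tau^{(g)})\right]=0,
\]
and then subtract it from both sides. Two facts drive this. First, $b_{-g}$ depends only on the other rollout trees. Second, the score function of any single trajectory has zero mean given its input.

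\textbf{Step 1 (independence).} For a fixed root task $X_0$, the $G$ trees $\mathcal{T}^{(1)},\dots,\mathcal{T}^{(G)}$ are sampled independently from the same policy $\pi_\theta$. Note that $b_{-g}$ is a measurable function of $\{R^{(g')}_{\mathrm{root}}\}_{g'\neq g}$, and hence of $\{\mathcal{T}^{(g')}\}_{g'\neq g}$ alone. The trajectory $\tau^{(g)}$ is a measurable function of $\mathcal{T}^{(g)}$. So conditionally on $X_0$, $b_{-g}$ is independent of $\tau^{(g)}$, together with its sub-task $X$ and its ancestors in $\mathcal{T}^{(g)}$. This gives
\[
\mathbb{E}\!\left[b_{-g}\,\nabla_\theta\log\pi_\theta(\tau^{(g)})\,\middle|\,X_0\right]=\mathbb{E}[b_{-g}\mid X_0]\cdot\mathbb{E}\!\left[\nabla_\theta\log\pi_\theta(\tau^{(g)})\,\middle|\,X_0\right].
\]

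\textbf{Step 2 (score has zero mean for sub-agents).} This is the step that needs care, because for a sub-agent the input $X$ is itself random and generated by the policy. Here $\nabla_\theta\log\pi_\theta(\tau^{(g)})$ means $\nabla_\theta\log\pi_\theta(\tau_X\mid X)$, the score of the node's own actions given its assigned task. This matches the per-node term in Eq.~\eqref{eq:rao_gradient_weighted}. I will condition on the history $H$ that determines $X$: the ancestors' trajectories up to the spawn call, and the environment randomness. Given $H$, the trajectory $\tau_X$ is drawn from $\pi_\theta(\cdot\mid X)$. The standard identity then gives
\[
\mathbb{E}[\nabla_\theta\log\pi_\theta(\tau_X\mid X)\mid H]=\sum_{\tau}\nabla_\theta\pi_\theta(\tau\mid X)=\nabla_\theta 1=0.
\]
Concurrent siblings interleave their execution, but they do not change the conditional law of $\tau_X$ given $X$; I would note this explicitly. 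The tower rule then gives a zero mean conditional on $X_0$.

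\textbf{Step 3 (conclude).} Combining Steps 1 and 2 gives zero conditional on $X_0$. Averaging over $X_0\sim\mathcal{D}_0$, and assuming integrability (rewards lie in $[0,1+\lambda]$, so $b_{-g}$ is bounded), the cross term vanishes, which proves the stated identity.

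I expect the main obstacle to be stating precisely which ``$\log\pi_\theta(\tau^{(g)})$'' is meant. If it were read as the full score of the tree, including the choice of $X$, the argument still goes through, since $b_{-g}$ is independent of all of $\mathcal{T}^{(g)}$. I would briefly remark that either reading works.
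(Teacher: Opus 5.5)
Your proposal is correct and follows the paper's own argument: you reduce the claim to $\mathbb{E}[b_{-g}\nabla_\theta\log\pi_\theta(\tau^{(g)})]=0$, factor this expectation using the independence of $b_{-g}$ from tree $g$, and kill the score factor with the score-function identity; conditioning on $X_0$ (the trees are independent only given the root task, which the paper leaves implicit) and on the history generating the sub-task is a genuine tightening. One small correction: in shared-state environments such as \textcraft{} (siblings share the inventory), concurrent siblings \emph{can} change the conditional law of $\tau_X$ given $X$, so justify Step 2 per action instead, by writing $\nabla_\theta\log\pi_\theta(\tau_X\mid X)=\sum_t\nabla_\theta\log\pi_\theta(a_t\mid h_t)$ and noting that each term has zero mean conditional on the global history (including siblings) just before $a_t$ is sampled.
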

\begin{proof}
It suffices to show that the bias term introduced by the baseline vanishes:
\[
\mathbb{E}\!\left[
b_{-g}\,\nabla_\theta \log \pi_\theta(\tau^{(g)})
\right]
= 0.
\]
By construction, $b_{-g}$ is a function of the root rewards $\{R_{\mathrm{root}}^{(g')}\}_{g' \neq g}$ alone. Since the $G$ rollout trees are sampled independently, $b_{-g}$ is independent of $\tau^{(g)}$ and of the task assigned to the node that generated it, including for any sub-agent within rollout $g$. Therefore,
\[
\mathbb{E}\!\left[
b_{-g}\,\nabla_\theta \log \pi_\theta(\tau^{(g)})
\right]
=
\mathbb{E}[b_{-g}]\cdot
\mathbb{E}\!\left[
\nabla_\theta \log \pi_\theta(\tau^{(g)})
\right]
= 0,
\]
where the second factor vanishes by the score-function identity $\mathbb{E}\!\left[\nabla_\theta \log \pi_\theta(\tau^{(g)})\right] = 0$.
\end{proof}

\subsection{Environment Action Spaces}\label{app:action-space}
For each domain, the agent has access to standard python functionality via a python REPL implemented using \href{https://jupyter.org/}{Jupyter's} IPython kernel. On top of this, each domain has additional predefined functions available. We describe these in the tables below.

\begin{table}[h!]
\centering
\small
\vspace{-2mm}
\resizebox{\textwidth}{!}{
\begin{tabular}{l|l}
\toprule
\multicolumn{1}{c|}{\bf Action Type} & \multicolumn{1}{c}{\bf Description} \\
\midrule
{\tt craft(ingredients: dict, target: tuple[str, int]) -> str} & {Try crafting the target given the ingredients.} \\
{\tt get\_info(items: list) -> list[dict]} & {Get information about item(s), including crafting recipes.} \\
{\tt finish(message: str) -> str} & {Finish the rollout with a message string.} \\
{\tt launch\_subagent(targets: dict) -> str} & {Launch a subagent with some goal targets.}  \\
\bottomrule
\end{tabular}
}
\caption{\textcraft~environment actions.}
\label{tab:textcraft-action-space}
\end{table}

\begin{table}[h!]
\centering
\small
\vspace{-2mm}
\resizebox{\textwidth}{!}{
\begin{tabular}{l|l}
\toprule
\multicolumn{1}{c|}{\bf Action Type} & \multicolumn{1}{c}{\bf Description} \\
\midrule
{\tt launch\_subagent(goal: str, context: str) -> Any} & {Launch a subagent with a delegated goal.}  \\
\bottomrule
\end{tabular}
}
\caption{\oolong~environment actions.}
\label{tab:oolong-action-space}
\end{table}

\begin{table}[h!]
\centering
\small
\vspace{-2mm}
\resizebox{\textwidth}{!}{
\begin{tabular}{l|l}
\toprule
\multicolumn{1}{c|}{\bf Action Type} & \multicolumn{1}{c}{\bf Description} \\
\midrule
{\tt search\_web(query: str, max\_results: int = 5) -> dict} & {Search the web using the query.} \\
{\tt view\_webpage\_content(url: str) -> str} & {Get the content of a webpage.} \\
{\tt finish(message: str) -> str} & {Finish the rollout with a message string.} \\
{\tt launch\_subagent(goal: str) -> Any} & {Launch a subagent with a delegated goal.}  \\
\bottomrule
\end{tabular}
}
\caption{\deepdive~environment actions. All functions are asynchronous, allowing concurrent execution.}
\label{tab:deepdive-action-space}
\end{table}

\begin{table}[h!]
\centering
\small
\vspace{-2mm}
\resizebox{\textwidth}{!}{
\begin{tabular}{l|l}
\toprule
\multicolumn{1}{c|}{\bf Action Type} & \multicolumn{1}{c}{\bf Description} \\
\midrule
{\tt search\_emails(query: str, max\_results: int = 10) -> dict} & {Search a user's inbox using the query.} \\
{\tt read\_email(message\_id: str) -> str} & {Get the content of an email.} \\
{\tt finish(message: str) -> str} & {Finish the rollout with a message string.} \\
{\tt launch\_subagent(goal: str) -> Any} & {Launch a subagent with a delegated goal.}  \\
\bottomrule
\end{tabular}
}
\caption{\arte~environment actions. All functions are asynchronous, allowing concurrent execution.}
\label{tab:arte-action-space}
\end{table}

\subsection{Additional Experimentation Details}\label{app:exp-details}

\textcraft{} and \deepdive{} experiments were performed using the AReaL training backend \citep{areal} with a learning rate of 3e-6. For the single-agent baseline, we allow 200 and 100 steps per rollout during training for the two benchmarks, respectively, and 20K steps during evaluation. For recursive agents, we allow 25 steps for both the root and sub-agents, and for \deepdive{} we . For \oolong~experiments we use Tinker~\citep{tinker} for training: LoRa with rank 32 and a learning rate of 3e-5. The single-agent baseline is allowed 50 steps per rollout, while in the recursive agent case, both root and sub-agents were allowed 15 steps. For all experiments we use a batch size of 16 tasks and a group size of 8 rollouts and use a CISPO-style objective \citep{cispo}. All training was done using asynchronous RL, allowing staleness of up to 3 batches. For evaluation experiments we use an 8xH200 node for inference: for \textcraft{} and \deepdive{}, we use data parallel across all 8 GPUs with a single rollout performing inference at a time, while for \oolong~we use tensor parallel across all 8 GPUs with 5 rollouts performing inference at a time. 

\subsection{Token Usage Comparison}
In the tables below, we compare token usage when benchmarking the recursive and single agents across the different domains. Cache read estimates are idealized, based on an assumption of perfect cache reads. On domains like \oolong{} and \deepdive{} where the single agent learns only simple and short-horizon strategies for solving the tasks, token consumption is much higher for the recursive agent which learns more effortful and long-horizon strategies that scale to harder problems. In contrast, on \textcraft{} where both the single agent and the recursive agent learn effective long-horizon strategies for solving tasks, the recursive agent actually ends up using much fewer tokens compared to the single agent (especially as we increase difficulty). This is because as task horizons increase, single agent token usage scales quadratically with the number of steps in the trajectory, since each step needs to re-encode previous history tokens (although much of this can be alleviated with KV-Caching); on the other hand, the recursive agent splits the trajectory into many shorter sub-trajectories which do not need to re-encode a long history for each new agent step performed.

\begin{table}[h]
\centering
\caption{\textcraft~ token usage on the intersection of tasks solved by both single and recursive agents in the 40K/256K setting.}
\label{tab:textcraft-40k-token-breakdown}
\footnotesize
\setlength{\tabcolsep}{3.5pt}

\begin{NiceTabular}{llrrrr}[cell-space-top-limit=1.5pt,cell-space-bottom-limit=1.5pt]
\toprule
Diff. & Method & Input & Output & Cache-Read & Total \\
\midrule
\Block[fill=evalAll!35]{2-1}{\centering All}
  & Single    & 1022K & 6K  & 1021K & 1029K \\
  & Recursive & 184K  & 12K & 153K  & 196K \\
\cmidrule(lr){1-6}

\Block[fill=evalEasy!35]{2-1}{\centering Easy}
  & Single    & 24K & 2K & 23K & 25K \\
  & Recursive & 32K & 2K & 26K & 34K \\
\cmidrule(lr){1-6}

\Block[fill=evalMedium!35]{2-1}{\centering Medium}
  & Single    & 768K & 8K  & 767K & 775K \\
  & Recursive & 169K & 11K & 141K & 181K \\
\cmidrule(lr){1-6}

\Block[fill=evalHard!35]{2-1}{\centering Hard}
  & Single    & 8594K & 26K & 8593K & 8619K \\
  & Recursive & 1153K & 72K & 965K  & 1225K \\
\bottomrule
\end{NiceTabular}
\end{table}

\begin{table}[h]
\centering
\caption{\textsc{Oolong-real} token usage on the intersection of tasks with non-zero score for both single and recursive agents.}
\label{tab:oolong-real-token-breakdown}
\footnotesize
\setlength{\tabcolsep}{3.5pt}

\begin{NiceTabular}{llrrrr}[cell-space-top-limit=1.5pt,cell-space-bottom-limit=1.5pt]
\toprule
Bucket & Method & Input & Output & Cache-Read & Total \\
\midrule
\Block[fill=evalAll!35]{2-1}{\centering All}
  & Single    & 14.2K  & 0.9K  & 13.7K  & 15.1K   \\
  & Recursive & 257.3K & 67.5K & 243.5K & 324.8K  \\
\cmidrule(lr){1-6}

\Block[fill=evalEasy!35]{2-1}{\centering 55K}
  & Single    & 11.0K  & 0.7K  & 10.6K  & 11.8K   \\
  & Recursive & 119.3K & 27.3K & 112.4K & 146.6K  \\
\cmidrule(lr){1-6}

\Block[fill=evalMedium!35]{2-1}{\centering 118K}
  & Single    & 9.9K   & 0.9K  & 9.4K   & 10.8K   \\
  & Recursive & 244.1K & 64.1K & 230.8K & 308.2K  \\
\cmidrule(lr){1-6}

\Block[fill=evalHard!35]{2-1}{\centering 175K}
  & Single    & 20.3K  & 1.0K  & 19.9K  & 21.3K   \\
  & Recursive & 365.1K & 98.5K & 346.0K & 463.6K  \\
\bottomrule
\end{NiceTabular}
\end{table}

\begin{table}[h]
\centering
\caption{\textsc{DeepDive} token usage on the intersection evaluation tasks solved by both methods.}
\label{tab:deepdive-token-breakdown}
\footnotesize
\setlength{\tabcolsep}{3.5pt}

\begin{NiceTabular}{lrrrr}[cell-space-top-limit=1.5pt,cell-space-bottom-limit=1.5pt]
\toprule
Method & Input & Output & Cache-Read & Total \\
\midrule

Single    & 19.1K  & 0.8K  & 18.2K  & 19.9K  \\
Recursive & 538.4K & 23.0K & 499.5K & 561.4K \\
\bottomrule
\end{NiceTabular}
\end{table}

\subsection{TextCraft-Synth Prompts}

\newtcolorbox{promptbox}[1][]{
  enhanced,
  breakable,
  colback=promptbg,
  colframe=promptred,
  coltitle=white,
  fonttitle=\bfseries,
  title=#1,
  boxrule=1pt,
  arc=3mm,
  left=6mm,
  right=6mm,
  top=4mm,
  bottom=4mm
}

\begin{promptbox}[TextCraft-Synth Single Agent System Prompt]
You are an agent in a crafting game.
Your goal is to craft items by combining ingredients.
You have access to an inventory of existing ingredients, which are sufficient to craft the target items; though, you may need to craft intermediate ingredients first.\\

Note: If you already have one of the target items in your inventory, you should craft the requested number of the target on top of what you already have.\\
For example, if you already have 2 wooden\_pickaxes but your goal is to craft 3, your inventory should end up with 5 wooden\_pickaxes.\\

CRAFTING STRATEGY:\\
- Recipes produce fixed quantities per execution - you cannot craft arbitrary amounts \\
  Example: If a recipe produces 2 items, you can only craft in multiples of 2 (2, 4, 6...)\\
- Recipe ingredients scale with the number of times you execute it
  Example: Recipe "2 ore → 2 items" means 2 ore for 1 execution, 4 ore for 2 executions\\
- Always verify what you have before claiming something is impossible\\
- Check your inventory and recipe information to confirm ingredient availability\\
- Calculate carefully: if a recipe uses 2 ingredients to make 2 items, you need exactly 2 ingredients for 2 items\\
\\
You can perform actions by writing Python code blocks. You will get multiple steps to complete the task.
For your current step, first briefly reason (~1-3 sentences) about your strategy in\texttt{<thought> </thought>} tags, then output your code in \texttt{<code> </code>} tags.

Your code will be executed in a Jupyter notebook and the output will be shown to you.
\end{promptbox}

\begin{promptbox}[TextCraft-Synth Recursive Agent System Prompt]
You are an agent in a crafting game.\\
Your goal is to craft items by combining ingredients.\\
You have access to an inventory of existing ingredients, which are sufficient to craft the target items; though, you may need to craft intermediate ingredients first.\\

Note: If you already have one of the target items in your inventory, you should craft the requested number of the target on top of what you already have.\\
For example, if you already have 2 wooden\_pickaxes but your goal is to craft 3, your inventory should end up with 5 wooden\_pickaxes.\\

CRAFTING STRATEGY:\\
- Recipes produce fixed quantities per execution - you cannot craft arbitrary amounts\\
  Example: If a recipe produces 2 items, you can only craft in multiples of 2 (2, 4, 6...)\\
- Recipe ingredients scale with the number of times you execute it
  Example: Recipe "2 ore → 2 items" means 2 ore for 1 execution, 4 ore for 2 executions\\
- Always verify what you have before claiming something is impossible\\
- Check your inventory and recipe information to confirm ingredient availability\\
- Calculate carefully: if a recipe uses 2 ingredients to make 2 items, you need exactly 2 ingredients for 2 items\\

DELEGATION STRATEGY:\\
- It is highly recommended to delegate crafting of intermediate ingredients\\
- Break complex tasks into INDEPENDENT subtasks that can be solved separately\\
- For tasks that are sufficiently complex, it is recommended to recursively delegate; i.e., subagents can further delegate to other subagents.\\
- Delegate one group of related items at a time, not everything at once\\
- Use crafting depth from get\_info() to estimate budget requirements:\\
  * crafting\_depth indicates complexity (0=base item, 1=direct craft, 2+=needs intermediates)\\
  * Budget heuristic: depth × 8-10 steps (depth=4 needs ~32-40 steps, depth=8 needs ~64-80 steps)\\
  * Always check crafting\_depth before delegating to avoid under-budgeting
- Items can be delegated in parallel if they don't depend on each other\\
- Reserve budget for yourself to do final assembly after subtasks complete\\
- Delegated tasks share your inventory - results are immediately available\\
- IMPORTANT: launch\_subagent is an async function, you MUST use await:\\
  * CORRECT: await launch\_subagent({"item": 1}, 20)\\
  * CORRECT: await asyncio.gather(launch\_subagent(...), launch\_subagent(...))\\
  * WRONG: launch\_subagent({"item": 1}, 20) -- missing await, will error\\

You can perform actions by writing Python code blocks. You will get multiple steps to complete the task.
For your current step, first briefly reason (~1-3 sentences) about your strategy in \texttt{<thought> </thought>} tags, then output your code in \texttt{<code> </code>} tags.

Your code will be executed in a Jupyter notebook and the output will be shown to you.
\end{promptbox}

\begin{promptbox}[TextCraft-Synth User Prompt (Same for both)]
Craft the following items: \texttt{<count>x <item>[, <count>x <item>, ...]}\\
Example:\\
Craft the following items: \texttt{1x m5\_i3, 2x c3\_i2}
\end{promptbox}

\begin{promptbox}[TextCraft-Synth Single Agent Action Space]
\begin{verbatim}
1. def craft(ingredients: dict, target: tuple[str, int]) -> str
   Craft items using ingredients from your inventory.
   - ingredients: Dict of item_name: count to consume
   - target: (item_name, total_count) where total_count must be 
        divisible by recipe result_count
   - Example: craft({"m0_i1": 2, "m1_i1": 1}, ("m2_i2", 2))

2. def get_info(items: list) -> list[dict]
   Get recipe information for items.
   - Returns: List with {"item": str, "can_craft": bool, 
        "is_base": bool, "in_inventory": int, "crafting_depth": int, 
        "recipes": [...]}
   - crafting_depth indicates complexity: 0=base item, 
        1=direct craft, 2+=needs intermediate steps
   - Each recipe shows {"ingredients": {...}, "result_count": int}
   - Example: get_info(["m2_i2", "raw_m0"])

3. def view_inventory() -> dict
   View your current inventory.
   - Returns: Dict of {item_name: count}
   - Example: inv = view_inventory()

4. def finish(message: str) -> str
   Complete the task.
   - Example: finish("Successfully crafted all required items")
\end{verbatim}
\end{promptbox}

\begin{promptbox}[TextCraft-Synth Recursive Agent Action Space]
\begin{verbatim}
1. def craft(ingredients: dict, target: tuple[str, int]) -> str
   Craft items using ingredients from your inventory.
   - ingredients: Dict of {item_name: count} to consume
   - target: (item_name, total_count) where total_count must be 
        divisible by recipe result_count
   - Example: craft({"m0_i1": 2, "m1_i1": 1}, ("m2_i2", 2))

2. def get_info(items: list) -> list[dict]
   Get recipe information for items.
   - Returns: List with {"item": str, "can_craft": bool, "is_base": bool, 
        "in_inventory": int, "crafting_depth": int, "recipes": [...]}
   - crafting_depth indicates complexity: 0=base item, 1=direct craft, 
        2+=needs intermediate steps
   - Each recipe shows {"ingredients": {...}, "result_count": int}
   - Example: get_info(["m2_i2", "raw_m0"])

3. def view_inventory() -> dict
   View your current inventory.
   - Returns: Dict of {item_name: count}
   - Example: inv = view_inventory()

4. def finish(message: str) -> str
   Complete the task.
   - Example: finish("Successfully crafted all required items")

5. async def launch_subagent(targets: dict, num_steps: int, 
    context: str = "") -> str
   Launch a subagent to craft specific targets (shares your inventory).
   - targets: Dict of {item_name: count} to craft
   - num_steps: Budget for subagent
     * Use crafting_depth from get_info() to estimate: depth × 8-10 steps
     * Example: depth=4 needs ~32-40 steps, depth=8 needs ~64-80 steps
   - context: Optional context string for the subagent
   - Sequential: await launch_subagent({"m0_i2": 4}, 20)
   - Parallel: results = await asyncio.gather(
       launch_subagent({"m0_i2": 4}, 20),
       launch_subagent({"c1_i2": 3}, 15)
     )
   - Returns: Subagent's finish message (or list if using gather)

Note: asyncio is already imported.
Use await asyncio.gather(...) to run subtasks in parallel or 
await launch_subagent() for a single subtask. Do not forget 
to await the results.
\end{verbatim}
\end{promptbox}

\subsection{Oolong-Real Prompts}\label{app:oolong-prompts}
\newtcolorbox{oolongpromptbox}[1][]{
  enhanced,
  breakable,
  colback=evalheader,
  colframe=promptblue,
  coltitle=white,
  fonttitle=\bfseries,
  title=#1,
  boxrule=1pt,
  arc=3mm,
  left=6mm,
  right=6mm,
  top=4mm,
  bottom=4mm
}
\begin{oolongpromptbox}[Oolong-Real Single Agent System Prompt]
You are tasked with answering a query that requires analyzing and aggregating information from a large context.\\

You have access to a REPL environment with the following pre-loaded variable:\\
- `context` (str): The full text context to analyze (may be very large)\\
\\
\texttt{<TIPS>}

CONTEXT ANALYSIS:

- If the length on the context is very large (>32K characters), first examine/peek into the structure of the context (what format is the data in?)

- The context may be structured in a way that the task can be solved via programmatic parsing or matching. **Only take this approach if you are 100%
- For the majority of cases, you should print out the context with `print(context)` to observe and read it manually and answer the question by reading the context.

ANSWER SUBMISSION:

- You can submit your answer using the `finish` function in the format requested in the user provided goal.

\texttt{</TIPS>}\\
\\
You can perform actions by writing Python code blocks. You will get multiple steps to complete the task.
For your current step, first briefly reason (~1-3 sentences) about your strategy in \texttt{<thought> </thought>} tags, then output your code in \texttt{<python> </python>} tags.\\
\\
Your code will be executed in a Jupyter notebook and the output will be shown to you.
\end{oolongpromptbox}

\begin{oolongpromptbox}[Oolong-Real Recursive Agent System Prompt]
You are tasked with answering a query that requires analyzing and aggregating information from a large context.\\
You have access to a REPL environment with the following pre-loaded variable:\\
- `context` (str): The full text context to analyze (may be very large)\\
\\
\texttt{<TIPS>}\\
CONTEXT ANALYSIS:\\
- First check if the length on the context is very large (\texttt{>}32K characters) using `len(context)`.\\
- For very large contexts (i.e., \texttt{>}32K characters), work with chunks rather than the entire context at once.\\
- Use subagents to process chunks and then aggregate the results to produce a final answer. Try not to split the context into too many chunks (32K characters per chunk is a good rule of thumb)\\
- If the context \texttt{<=} 32K characters, prefer to process your context by printing out and reading it rather than using programmatic heuristics.\\
- **IMPORTANT: DO NOT USE regex, string matching, etc. types of programmatic heuristics. It is important to read the context with `print(context)` to be accurate in your answer.**\\
SUBAGENT DELEGATION:\\
-**IMPORTANT: Do not use subagents if the context you need to process is \texttt{<=} 32K characters. Just print out the context to observe it directly and answer the question by reading the context.**\\
- You have the ability to spawn subagents (other instantiations of yourself), by providing them with their own `context`/chunk to process and a goal/instruction for what result it should return.\\
- You can use `asyncio.gather` to process multiple chunks simultaneously.\\
- Be specific about the format and type in which you expect subagents to return their results.\\
- Do not provide the context/chunk as part of the goal. Instead, pass it explicitly as the `context` argument to the `launch\_subagent` function.\\
ANSWER SUBMISSION:
- You can submit your answer using the `finish` function in the format requested in the user provided goal.\\
\texttt{</TIPS>}\\

You can perform printing out or peaking into the context or launching sub-agents using Python code blocks. You will get multiple steps to complete the task.
For your current step, first briefly reason (~1-3 sentences) about your recursive strategy in \texttt{<thought> </thought>} tags, then output your code in \texttt{<python> </python>} tags.\\
\\
Your code will be executed in a Jupyter notebook and the output will be shown to you. The python code block should be formatted as follows: \texttt{<python>code block</python>} without any other tags.\\
\\
Do not output anything else except for \texttt{<thought>...</thought>}\\
\texttt{<python>...</python>}
\end{oolongpromptbox}

\begin{oolongpromptbox}[Oolong-Real User Prompt (same for both)]
\texttt{<question from the dataset>}

Notes:\\
- The task/user prompt is exactly the dataset question.\\
- The long document to analyze is provided separately as the pre-loaded 
    REPL variable `context`.
\end{oolongpromptbox}

\begin{oolongpromptbox}[Oolong-Real Single Agent Action Space]
\begin{verbatim}
Available Actions (python functions):

Pre-loaded variable:
- context (str): The full text context to analyze

1. def finish(message: str) -> str
   Complete the task with your answer.
\end{verbatim}
\end{oolongpromptbox}

\begin{oolongpromptbox}[Oolong-Real Recursive Agent Action Space]
\begin{verbatim}
Available Actions (python functions):

Pre-loaded variable:
- context (str): The full text context to analyze

1. async def launch_subagent(goal: str, context: str) -> Any
   Launch a subagent to process a chunk of the context.
   Returns the result of the subagent's execution with return type 
   specified in the goal
   (if specified, else str).
   - goal: The goal/instruction for the subagent. Tell the subagent what 
    information you want and specify the exact format and type in which 
    you expect the subagent to return its result.
   - context: The chunk of the context to process.

   Example:
   result = await launch_subagent(
       goal="Find the last two spells mentioned in the context by printing 
       out the context and manually reading it. Return the spells as a 
       stringified list of spells that I can parse with json.loads()",
       context=context_chunk
   )
   print(f"Subagent result: {result}")

   Note: `asyncio` is already imported. Use `await asyncio.gather(...)` to 
   run subtasks in parallel or `await launch_subagent()` for a single subtask. 
   Do not forget to await the function call.

2. def finish(result: Any) -> Any
   Complete the task with your result.
\end{verbatim}
\end{oolongpromptbox}

\begin{oolongpromptbox}[Oolong-Real LLM Judge Prompt for sub-agent rewards]
We need to judge the performance of an agent on a task. \\The task relates to aggregating some information from a potentially very large context.\\
1. Read the context carefully and see if the agent's answer is accurate. \\
2. Do not mark the agent as successful unless it prints out the context and \\reads it manually or alternatively uses subagents to answer the question.\\
For example, if the agent uses regex or string matching/contains logic to\\ answer the question, this is a heuristic that may not be reliable in general\\ and thus should not be marked as successful.\\ \\
Using subagents is okay since subagents can view the context on behalf of the agent.\\
Please provide a reason and success flag (boolean value) in the following format:\\
\begin{verbatim}
```
{
  "reason": "Brief reasoning for success flag here.",
  "success": "True" or "False",
}
```
\end{verbatim}
\end{oolongpromptbox}

\subsection{DeepDive Prompts}\label{app:deepdive-prompts}

\newtcolorbox{deepdivepromptbox}[1][]{
  enhanced,
  breakable,
  colback=evalEasy,
  colframe=promptgreen,
  coltitle=white,
  fonttitle=\bfseries,
  title=#1,
  boxrule=1pt,
  arc=3mm,
  left=6mm,
  right=6mm,
  top=4mm,
  bottom=4mm
}

\begin{deepdivepromptbox}[DeepDive Single Agent System Prompt]
You are a deep research agent solving a factual question by searching the web.\\

You have access to Python plus web-search tools. Use them deliberately:\\
- Start broad, then refine based on what you learn.\\
- Cross-check key claims across multiple sources when possible.\\
- Use \texttt{await view\_webpage\_content(url)} when snippets are insufficient or you need detailed evidence.\\
- Avoid dumping huge webpage bodies into the notebook unless necessary.\\
- Keep intermediate notes concise and use Python to organize findings.\\

ANSWER SUBMISSION:\\
- When you are confident, call \texttt{finish(...)}.\\
- The final answer should directly answer the question and stay concise unless the task explicitly asks for more detail.\\

OTHER TIPS:\\
- \textbf{All functions except for finish are async functions. YOU MUST AWAIT THE RESULTS OF THESE FUNCTIONS}\\

You can perform actions by writing Python code blocks. You will get multiple steps to complete the task.
For your current step, first briefly reason (~1-3 sentences) in \texttt{<thought> </thought>} tags, then output code in \texttt{<python> </python>} tags.\\
Your code will be executed in a Jupyter notebook and the output will be shown to you.
\end{deepdivepromptbox}

\begin{deepdivepromptbox}[DeepDive Recursive Agent System Prompt]
You are a deep research agent solving a factual question by searching the web.\\

You have access to Python plus web-search tools, and you can delegate subproblems to subagents.\\

RESEARCH STRATEGY:\\
- Break the question into a small number of meaningful subquestions.\\
- Search broadly first, then narrow onto the most promising sources.\\
- Cross-check important claims across multiple sources when possible.\\
- Use \texttt{await view\_webpage\_content(url)} when search snippets are not enough.\\
- Use Python to store notes, compare evidence, and synthesize findings.\\

DELEGATION STRATEGY:\\
- You have the ability to spawn subagents and delegate subtasks to them. Make effective use of subagents to solve the task!\\
- Use \texttt{await launch\_subagent(goal)} for coherent subproblems such as source discovery, fact verification, or answering one component of a multi-hop question.\\
- Tell subagents exactly what to return, including format when useful.\\
- Subagents can run in parallel with \texttt{await asyncio.gather(...)}.\\
- Subagents can themselves delegate recursively.\\

ANSWER SUBMISSION:\\
- When you are confident, call \texttt{finish(...)}.\\
- The final answer should directly answer the question and stay concise unless the task explicitly asks for more detail.\\

OTHER TIPS:\\
- \textbf{All functions except for finish are async functions. YOU MUST AWAIT THE RESULTS OF THESE FUNCTIONS}\\

You can perform actions by writing Python code blocks. You will get multiple steps to complete the task.
For your current step, first briefly reason (~1-3 sentences) about your research or delegation strategy in \texttt{<thought> </thought>} tags, then output code in \texttt{<python> </python>} tags.\\
Your code will be executed in a Jupyter notebook and the output will be shown to you.
\end{deepdivepromptbox}

\begin{deepdivepromptbox}[DeepDive User Prompt (same for both)]
\texttt{<question from the dataset>}\\

Notes:\\
- The task/user prompt is exactly the dataset question.\\
- The ground-truth answer is used only for evaluation and is not shown to the agent.
\end{deepdivepromptbox}

\begin{deepdivepromptbox}[DeepDive Single Agent Action Space]
\begin{verbatim}
Available Actions (python functions):

1. async def search_web(query: str, max_results: int = 5) -> dict
   Search the web for information related to the query.
   - query: The query to search for.
   - max_results: Optional maximum number of results to return.
     Must be between 1 and 20. Defaults to 5.

   Returns a dictionary containing:
   {
       "query": str,
       "follow_up_questions": list[str],
       "answer": str,
       "images": list[str],
       "results": list[dict],
       "response_time": float,
       "request_id": str,
   }

   Each result contains:
   {
       "url": str,
       "title": str,
       "content": str,
       "score": float,
       "raw_content": str | None,
   }

2. async def view_webpage_content(url: str) -> str
   View the content of a webpage.
   - url: The URL of the webpage to view.

   Returns a string containing the webpage content. This may be very long,
   so it is often useful to inspect its size before printing the full text.

3. def finish(message: str) -> str
   Complete the task with your answer.
   This is synchronous and should not be awaited.
\end{verbatim}
\end{deepdivepromptbox}

\begin{deepdivepromptbox}[DeepDive Recursive Agent Action Space]
\begin{verbatim}
Available Actions (python functions):

1. async def launch_subagent(goal: str) -> Any
   Launch a subagent to solve a subtask.
   - goal: The instruction for the subagent. This can be a simple or
     compound task. Subagents can recursively delegate tasks to other
     subagents.
   - Specify the format and type of answer you expect from the subagent.

   Example:
   ps5_price_range = launch_subagent(
       "Find the price range of a PS5 across sony, bestbuy, amazon and
       gamestop. Return the answer as a string of the form '$$$ - $$$$'."
   )
   switch2_price_range = launch_subagent(
       "Find the price range of a Switch 2 across nintendo, bestbuy,
       amazon and gamestop. Return the answer as a string of the form
       '$$$ - $$$$'."
   )
   results = await asyncio.gather(
       ps5_price_range,
       switch2_price_range,
       return_exceptions=True,
   )

   Note: asyncio is already imported. Use await asyncio.gather(...) to run
   subagents in parallel or await launch_subagent(goal) for a single
   subagent. Do not forget to await the results.

2. async def search_web(query: str, max_results: int = 5) -> dict
   Search the web for information related to the query.
   - query: The query to search for.
   - max_results: Optional maximum number of results to return.
     Must be between 1 and 20. Defaults to 5.

3. async def view_webpage_content(url: str) -> str
   View the content of a webpage.
   Returns a string containing the webpage content. This may be very long.

4. def finish(message: str) -> str
   Complete the task with your answer.
   This is synchronous and should not be awaited.
\end{verbatim}
\end{deepdivepromptbox}

\begin{deepdivepromptbox}[DeepDive LLM Judge Prompt (Root Task)]
We need to judge the performance of a deep research agent on a task. The task requires searching the web for information across various sources and synthesizing information together to answer a question.\\
The agent may use subagents to solve parts of the task. Do not penalize the model for relying on subagents, unless the subtasks delegated to the subagents are not meaningful or useful for the task.\\
You will be given the ground truth answer to the task and the agent's answer to the task.\\
When comparing the agent's answer to the ground truth answer, it is acceptable to have minor formatting differences as long as the core information is equivalent.\\
Please provide a reason and success flag (boolean value) in the following format:\\
\begin{verbatim}
```json
{
    "reason": "Brief reasoning for success flag here.",
    "success": true or false
}
```
\end{verbatim}
\end{deepdivepromptbox}

\begin{deepdivepromptbox}[DeepDive LLM Judge Prompt (Sub-Task)]
We need to judge the performance of a deep research agent on a sub-task. The sub-task is part of a larger task that requires searching the web for information across various sources and synthesizing information together to answer a question.\\
The agent may use subagents to solve parts of the sub-task. Do not penalize the model for relying on subagents when the delegated subtasks are meaningful and useful.\\
You should mark the sub-task as successful if the agent's final answer is correct and, if the agent delegates, its delegation strategy is useful and efficient.\\
Reward delegation only when it is genuinely useful: the subtasks should be concrete, non-overlapping, and should help the agent search, verify, or synthesize information more effectively.\\
Mark the sub-task as unsuccessful for degenerate delegation behavior even if the final answer is correct. Degenerate behavior includes repeatedly forwarding nearly the same or whole goal to subagents without meaningful decomposition, search, or reading work, or wasteful failed launches caused by trying to delegate past the depth limit.\\
It is acceptable for the agent to do all the work itself, part of the work itself, or to use subagents heavily if those subagents do distinct useful work.\\
Do not give credit just because the wording of child tasks changes slightly from the agent's own goal; judge whether the decomposition is actually useful and efficient.\\
Please provide a reason and success flag (boolean value) in the following format:\\
\begin{verbatim}
```json
{
    "reason": "Brief reasoning for success flag here.",
    "success": true or false
}
```
\end{verbatim}
\end{deepdivepromptbox}

\end{document}